\documentclass{article} 
\usepackage{iclr2027_conference,times}

\iclrfinalcopy 

\usepackage[table]{xcolor}

\usepackage[
    pagebackref=true,
    breaklinks=true,
    colorlinks=true,
    urlcolor={violet!90!black},
    linkcolor={violet!90!black},
    citecolor={blue!90!black},
    bookmarks=false
]{hyperref}

\usepackage{amsmath,amsfonts,bm}

\def\eqref#1{equation~\ref{#1}}

\def\1{\bm{1}}

\DeclareMathAlphabet{\mathsfit}{\encodingdefault}{\sfdefault}{m}{sl}
\SetMathAlphabet{\mathsfit}{bold}{\encodingdefault}{\sfdefault}{bx}{n}

\usepackage{url}

\usepackage{latexsym}
\usepackage[T1]{fontenc}
\usepackage[utf8]{inputenc}
\usepackage{microtype}
\usepackage{inconsolata}
\usepackage{orcidlink}
\usepackage{amsmath}
\usepackage{amssymb}
\usepackage{amsfonts}
\usepackage{mathtools}
\usepackage{amsthm}
\usepackage{nicefrac}

\usepackage{booktabs}
\usepackage{multirow}
\usepackage{threeparttable}
\usepackage{tabularx}

\usepackage{graphicx}
\usepackage{subcaption}
\usepackage{wrapfig}
\graphicspath{{./}{figures/}{assets/}{analysis_out/}}

\usepackage{tikz}
\usepackage{pgfplots}
\usetikzlibrary{intersections}
\usepgfplotslibrary{fillbetween,groupplots,dateplot}
\pgfplotsset{compat=newest, scaled y ticks=false}

\usepackage{algorithm}
\usepackage{algpseudocode}

\usepackage{bookmark}
\usepackage[capitalize,noabbrev]{cleveref}

\usepackage{pifont}

\usepackage{xspace}

\theoremstyle{plain}
\newtheorem{theorem}{Theorem}[section]
\newtheorem{proposition}[theorem]{Proposition}

\theoremstyle{definition}

\theoremstyle{remark}
\newtheorem{remark}[theorem]{Remark}

\usepackage{etoc}
\usepackage{newtxtext}

\newcommand{\method}{DiffGate\xspace}

\definecolor{headergray}{RGB}{242,244,247}
\definecolor{methodblue}{RGB}{235,244,252}
\definecolor{privblue}{RGB}{246,249,253}
\definecolor{teachergray}{RGB}{248,248,248}
\definecolor{sectiongray}{RGB}{248,249,251}
\definecolor{trainyellow}{RGB}{252,248,238}

\usepackage[most]{tcolorbox}
\usepackage{listings}
\usepackage{xcolor}

\definecolor{qualgray}{RGB}{247,248,250}
\definecolor{qualborder}{RGB}{205,211,218}

\definecolor{grpobg}{RGB}{253,247,247}
\definecolor{grpoborder}{RGB}{194,128,128}

\definecolor{methodbg}{RGB}{243,248,253}
\definecolor{methodborder}{RGB}{92,143,190}

\definecolor{takeawaybg}{RGB}{250,249,244}
\definecolor{takeawayborder}{RGB}{191,181,143}

\lstdefinestyle{qualcode}{
    basicstyle=\ttfamily\scriptsize,
    columns=fullflexible,
    breaklines=true,
    breakatwhitespace=false,
    keepspaces=true,
    showstringspaces=false,
    frame=none,
    tabsize=4,
    aboveskip=0pt,
    belowskip=0pt
}

\newtcolorbox{exampleheader}[1]{
    enhanced,
    breakable,
    colback=white,
    colframe=qualborder,
    boxrule=0.6pt,
    arc=2pt,
    left=2mm,
    right=2mm,
    top=1.8mm,
    bottom=1.8mm,
    before skip=7pt,
    after skip=3pt,
    title={#1},
    fonttitle=\bfseries\small,
    coltitle=black,
    colbacktitle=qualgray,
    attach boxed title to top left={
        xshift=2mm,
        yshift=-1.5mm
    },
    boxed title style={
        colback=qualgray,
        colframe=qualborder,
        boxrule=0.5pt,
        arc=2pt
    }
}

\newtcolorbox{problembox}{
    enhanced,
    breakable,
    colback=qualgray,
    colframe=qualborder,
    boxrule=0.5pt,
    arc=2pt,
    left=2mm,
    right=2mm,
    top=1.2mm,
    bottom=1.2mm,
    before skip=2pt,
    after skip=3pt,
    title=\textbf{Problem},
    fonttitle=\small,
    coltitle=black
}

\newtcolorbox{grpotext}{
    enhanced,
    breakable,
    colback=grpobg,
    colframe=grpoborder,
    boxrule=0.55pt,
    arc=2pt,
    left=2mm,
    right=2mm,
    top=1.2mm,
    bottom=1.2mm,
    before skip=2pt,
    after skip=3pt,
    title=\textbf{GRPO},
    fonttitle=\small,
    coltitle=black
}

\newtcolorbox{methodtext}{
    enhanced,
    breakable,
    colback=methodbg,
    colframe=methodborder,
    boxrule=0.65pt,
    arc=2pt,
    left=2mm,
    right=2mm,
    top=1.2mm,
    bottom=1.2mm,
    before skip=2pt,
    after skip=3pt,
    title=\textbf{\method},
    fonttitle=\small,
    coltitle=black
}

\newtcblisting{grpocode}{
    enhanced,
    breakable,
    listing only,
    colback=grpobg,
    colframe=grpoborder,
    boxrule=0.55pt,
    arc=2pt,
    left=2mm,
    right=2mm,
    top=1mm,
    bottom=1mm,
    before skip=2pt,
    after skip=3pt,
    title=\textbf{GRPO},
    fonttitle=\small,
    coltitle=black,
    listing options={style=qualcode}
}

\newtcblisting{methodcode}{
    enhanced,
    breakable,
    listing only,
    colback=methodbg,
    colframe=methodborder,
    boxrule=0.65pt,
    arc=2pt,
    left=2mm,
    right=2mm,
    top=1mm,
    bottom=1mm,
    before skip=2pt,
    after skip=3pt,
    title=\textbf{\method},
    fonttitle=\small,
    coltitle=black,
    listing options={style=qualcode}
}

\title{DiffGate: Difficulty-Gated Teacher Guidance for On-Policy Distillation}

\author{
Karn Tiwari\textsuperscript{1,*},
Varnith Chordia\textsuperscript{2,*},
Prathosh A.P.\textsuperscript{1} \\
\textsuperscript{1}Indian Institute of Science, Bengaluru, India \\
\textsuperscript{2}Snap Inc. \\
\texttt{\{karntiwari,prathosh\}@iisc.ac.in} \\
\texttt{vchordia@snapchat.com}\\[0.5em]
{\small \textsuperscript{*}Equal contribution.}
}

\begin{document}

\maketitle

\begin{abstract}
On-policy distillation (OPD) has emerged as a widely used paradigm for post-training large language models, reducing the train--test mismatch of conventional distillation by supervising the student on its own generated trajectories. However, existing OPD objectives remain largely token-local and outcome-agnostic, optimizing teacher--student agreement at each prefix despite reasoning quality being determined at the trajectory level. Reinforcement learning with verifiable rewards (RLVR), particularly Group Relative Policy Optimization (GRPO), provides complementary outcome-level supervision but suffers from sparse rewards and coarse credit assignment. We show that OPD and RLVR exhibit complementary blind spots: teacher signals provide dense local guidance but are weakly aligned with rollout correctness, whereas group-relative rewards capture task success but provide coarse token-level credit and vanish on all-failure groups. We introduce \method, an outcome-gated objective that combines GRPO with selective, bounded teacher guidance. Teacher supervision is applied only to failed trajectories, scaled by group difficulty, and smoothly bounded to prevent extreme teacher--student discrepancies from dominating optimization. The verifier therefore determines \emph{which trajectories} receive teacher guidance, while the teacher provides dense token-level update directions within those trajectories. Across Qwen3-0.6B and Qwen3-1.7B students, \method improves code avg@8 over matched GRPO by $+1.7$ and $+1.8$ points and pass@8 by $+1.6$ and $+5.7$ points, respectively. On mathematics, avg@8 remains within $0.5$ points of GRPO while pass@8 improves by $+1.1$ and $+3.9$ points. Overall, \method improves pass@8 across all four model--domain settings, demonstrating improved solution coverage under our evaluation protocol.
\end{abstract}

\etocdepthtag.toc{mtmain}

\section{Introduction}
Knowledge distillation (KD) transfers capabilities from a teacher to a smaller student by matching their predictive distributions. On-policy distillation (OPD) reduces the train--test mismatch of conventional KD by training on trajectories sampled from the student and querying the teacher on the states the student actually visits. A common OPD formulation minimizes the reverse KL divergence between the student and teacher on these visited states, yielding dense token-level supervision from their probability differences. However, this supervision remains \emph{local and outcome-agnostic}: tokens are updated based on distributional disagreement regardless of whether the complete trajectory is correct. Consequently, teacher disagreement may reflect stylistic differences rather than reasoning errors, while large probability gaps can produce disproportionately large updates. Thus, OPD provides dense on-policy supervision, but not necessarily supervision aligned with task success \citep{hinton2015distilling, lu2025onpolicydistillation, li2026rethinking, gan2026teacher, fu2026revisiting, yang2026learning}.

Reinforcement learning with verifiable rewards (RLVR) provides a complementary source of supervision. Methods such as GRPO directly optimize sequence-level correctness using verifiable outcomes, including mathematical answers and executable code. However, this supervision is inherently coarse: a single trajectory-level advantage is propagated to all tokens, limiting fine-grained credit assignment. More critically, the group-relative advantage vanishes when all sampled trajectories receive the same reward, leaving difficult all-failure groups without a learning signal. RLVR can further suffer from sparse rewards and policy entropy collapse during optimization. OPD and GRPO therefore exhibit complementary blind spots: OPD is \emph{dense but correctness-blind}, whereas GRPO is \emph{correctness-aware but coarse and sparse} \citep{nie2026gradient, liu2026dapo, mishra2026policy, mroueh2025reinforcement, shao2024deepseekmath, wen2026reinforcement}.

This complementarity motivates integrating OPD with RLVR; however, naively combining their learning signals can introduce conflicting supervision and hinder optimization. As illustrated in Figure~\ref{fig:blindness}, teacher--student disagreement alone cannot reliably distinguish correct from incorrect trajectories or localize the decisive reasoning error. We therefore use outcome feedback to route teacher supervision at the trajectory level rather than treating teacher disagreement as a token-level credit-assignment signal. Uniform distillation may therefore modify already-correct solutions, while large teacher--student discrepancies can dominate updates even when outcome feedback is informative. Moreover, strong teacher imitation may restrict exploration and interfere with reward-driven learning. Our training diagnostics further show that teacher disagreement can be larger on correct trajectories than on failed ones, while reward-only training can substantially reduce policy entropy (see Appendix Section~\ref{app:adv_dynamics}). These observations motivate a simple principle: apply bounded teacher guidance selectively to failed trajectories, concentrate it where outcome supervision is weakest, and reduce its influence as the student improves \citep{wang2026distilled, cai2026h}.

We introduce \method, a unified post-training objective that injects teacher knowledge directly into the RL update through \emph{selective, bounded teacher guidance}. For each failed trajectory, \method converts the teacher--student log-probability gap into a bounded token-level signal and scales it by the difficulty of the sampled group. Teacher guidance is therefore strongest on all-failure groups, where GRPO provides no signal, decreases as the group solve rate improves, and vanishes on successful trajectories. Bounding the teacher term preserves its local update direction while preventing extreme discrepancies from dominating optimization. In this way, the verifier determines \emph{which failed trajectories} receive teacher guidance, while the teacher provides dense token-level update directions within those trajectories. Our main contributions are summarized as follows:

\begin{itemize}
    \item \textbf{Local--global supervision gap.}
    We identify complementary blind spots in OPD and GRPO: OPD provides dense but outcome-agnostic token supervision, whereas GRPO captures trajectory correctness but offers coarse credit and vanishes on reward-uniform groups.
    \item \textbf{Difficulty-gated teacher guidance.} We introduce \method, which integrates bounded teacher guidance directly into the GRPO update, activating it only on failed trajectories and scaling it by group difficulty. This concentrates local supervision where outcome feedback is weakest while preserving successful trajectories.
    \item \textbf{Comprehensive empirical evaluation.} We evaluate \method\ against GRPO and distillation baselines on mathematical reasoning and code generation across Qwen3 student sizes, assess out-of-distribution transfer to science benchmarks, and analyze the effects of difficulty gating, bounded teacher guidance, rollout group size, and teacher strength.
\end{itemize}

\section{Related Work}
\label{sec:related}
\textbf{On-Policy Distillation (OPD).} OPD trains the student on its own generated trajectories and queries a teacher on the visited states, reducing the distribution mismatch of conventional distillation \citep{agarwal2024policy, li2026rethinking, xu2026deepseek, team2026kimi}. Recent work has focused on improving the reliability of this dense token-level supervision. EOPD adapts the divergence according to teacher uncertainty to mitigate the mode-seeking behavior of reverse KL \citep{jin2026entropy}, while Uni-OPD identifies insufficient exploration and unreliable teacher supervision as key bottlenecks and calibrates token guidance using outcome information \citep{hou2026uni}. Other analyses show that OPD can fail when teacher and student reasoning patterns are poorly aligned, when the teacher provides little additional capability, or when teacher gradients are noisy on already-correct trajectories \citep{li2026rethinking, armandpour2026unmasking}. TGPO further studies unreliable negative supervision under large teacher--student policy divergence \citep{liu2026teacher}. These methods improve \emph{which} teacher signals are trusted, but standard OPD remains primarily token-local: teacher--student disagreement alone does not directly encode whether the full trajectory succeeds.

\textbf{Reinforcement Learning with Verifiable Rewards (RLVR).} RLVR directly optimizes task success using automatically checkable outcomes, with GRPO becoming a widely used critic-free approach for mathematical and code reasoning \citep{shao2024deepseekmath, guo2025deepseek}. However, outcome rewards provide sparse and coarse supervision: GRPO assigns the same trajectory-level advantage to every token and yields no gradient when all responses in a group receive the same reward \citep{mishra2026policy, nie2026gradient}. Prior work addresses related limitations through dynamic sampling \citep{yu2026dapo}, correcting optimization and length biases \citep{liu2025understanding, sui2025stop}, or introducing finer-grained credit assignment \citep{li2026outcome, jiang2025rethinking, wang2026targeted}. These approaches improve outcome supervision, whereas \method complements it with dense teacher guidance where the group-relative signal is weak.

\textbf{Combining OPD and RLVR.}
Recent work has explored combining teacher supervision with outcome-based policy optimization. KDRL jointly optimizes GRPO and reverse-KL distillation, but its teacher signal is not explicitly bounded at the token level \citep{xu2025kdrl}. RL-aware distillation uses selective teacher imitation and a trust-region objective to balance distillation and reward optimization \citep{zhang2026reinforcement}. CoDistill-GRPO jointly trains large and small policies using outcome-based and distillation rewards, while Uni-OPD calibrates teacher supervision using outcome information \citep{kwon2026codistill,hou2026uni}. In contrast, \method\ combines bounded teacher guidance on failed trajectories with group-difficulty weighting, providing supervision when GRPO has no signal and reducing it as the group solve rate increases.

\begin{figure}[t]
    \centering
    \includegraphics[width=\linewidth]{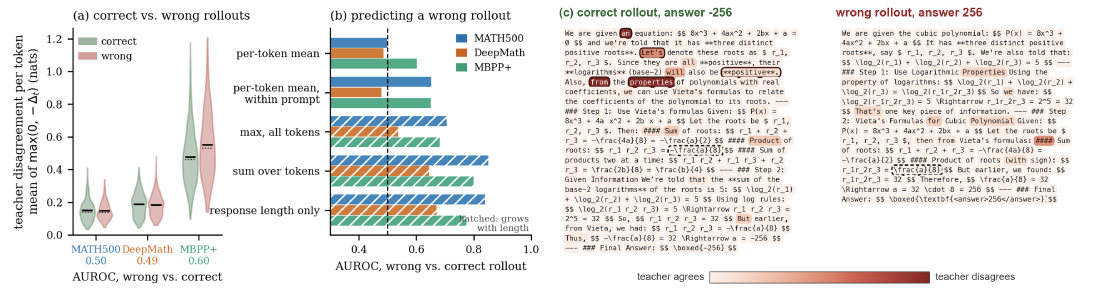}
    \caption{\small \textbf{Teacher disagreement alone does not reliably identify incorrect rollouts.} For a distillation-only Qwen3-1.7B student evaluated with a frozen Qwen3-8B teacher: (a) average token-level teacher--student disagreement distinguishes correct from incorrect rollouts only at chance level on mathematics (AUROC $0.50$ and $0.49$) and weakly on code (AUROC $0.60$); (b) aggregate disagreement becomes more predictive mainly when it also reflects response length; and (c) within an incorrect rollout, large disagreement can occur on ordinary tokens rather than on the token associated with the decisive error. These observations motivate \method\ to use the verifier to determine \emph{which} trajectories receive teacher guidance, while the teacher provides the local token-level update direction. Further analysis is provided in Appendix Section~\ref{sec:opd_failure}.}
    \label{fig:blindness}
    \vspace{-0.5cm}
\end{figure}

\section{Background and Preliminaries}
\label{sec:preliminaries}

\textbf{Problem Setup and Notations.}
Let $\pi_{\mathrm{T}}$ denote a fixed teacher policy and $\pi_\theta$ a student policy parameterized by $\theta$. Given a prompt $x\sim\mathcal{D}$, the student autoregressively generates a response as $y=(y_1,\ldots,y_L)\sim\pi_\theta(\cdot\mid x)$, where $y_t \sim \pi_\theta(\cdot\mid c_t), c_t=(x,y_{<t}),$ and $\pi_\theta(y\mid x)=\prod_{t=1}^{L}\pi_\theta(y_t\mid c_t)$. Because $y$ is sampled from the student, the visited contexts $c_t$ are induced by its own policy. We consider two sources of supervision on these trajectories: a teacher distribution $\pi_{\mathrm{T}}(\cdot\mid c_t)$ that provides token-level feedback, and a verifier reward $R(x,y)$ that evaluates the complete trajectory.

\textbf{On-Policy Distillation.} OPD supervises the student on prefixes generated by the student itself, reducing the mismatch between training and inference-time states. A standard formulation minimizes the reverse KL between the student and teacher distributions at student-visited contexts as follows:
\begin{align}
    \mathcal{L}_{\mathrm{OPD}}(\theta)
    =
    \mathbb{E}_{\substack{
        x\sim\mathcal{D},\\
        y\sim\pi_\theta(\cdot\mid x)
    }}
    \left[
        \frac{1}{L}
        \sum_{t=1}^{L}
        \log
        \frac{
            \pi_\theta(y_t\mid c_t)
        }{
            \pi_{\mathrm{T}}(y_t\mid c_t)
        }
    \right].
    \label{eq:opd_rkl}
\end{align}
Rather than evaluating the divergence over the full vocabulary, sampled-token OPD uses the student-generated token $y_t$ to construct a Monte Carlo approximation of the learning signal. For a fixed context $c$, maximizing the negative reverse KL yields a policy-gradient form as follows:
\begin{align}
    \nabla_\theta J_{\mathrm{OPD}}(\theta;c)
    =
    \mathbb{E}_{a\sim\pi_\theta(\cdot\mid c)}
    \left[
        A(a,c)\,
        \nabla_\theta
        \log\pi_\theta(a\mid c)
    \right],
\end{align}
where the token-level OPD advantage is $A(a,c) \coloneqq \log\pi_{\mathrm{T}}(a\mid c) - \log\pi_\theta(a\mid c).$
For a sampled token $y_t$, we therefore use $A_t = \operatorname{sg}\!\left[\log\pi_{\mathrm{T}}(y_t\mid c_t) -
\log\pi_\theta(y_t\mid c_t) \right],$
optimized via the Monte Carlo sampled gradient estimator. During training, the sampled-token advantage is computed from the behavior policy and treated as a stop-gradient quantity. Thus, $A_t>0$ increases the probability of tokens favored more by the teacher, while $A_t<0$ suppresses tokens over-weighted by the student. This provides a dense token-level learning signal at every generation step.

\textbf{RL with Verifiable Rewards.} RL with verifiable rewards (RLVR) optimizes sequence-level task success using an automatic verifier. For reasoning and code tasks, we consider a binary reward $R(x,y)\in\{0,1\}$
where $R(x,y)=1$ if the generated response $y$ is correct. The objective is:
\begin{align}
    J_{\mathrm{RLVR}}(\theta)
    =
    \mathbb{E}_{\substack{
        x\sim\mathcal{D},\\
        y\sim\pi_\theta(\cdot\mid x)
    }}
    \left[R(x,y)\right].
\end{align}
Using REINFORCE, the corresponding policy gradient is given by:
\begin{align}
    \nabla_\theta J_{\mathrm{RLVR}}(\theta)
    =
    \mathbb{E}
    \left[
        R(x,y)
        \sum_{t=1}^{L}
        \nabla_\theta
        \log\pi_\theta(y_t\mid c_t)
    \right].
    \label{eq:rlvr_gradient}
\end{align}
Unlike the token-level OPD signal, the verifier reward evaluates the complete response and does not identify which individual tokens contributed to its success or failure. Thus, unlike OPD, RLVR assigns the same trajectory-level signal to all tokens in a response. GRPO constructs a prompt-relative learning signal by comparing rewards within each rollout group, by sampling a group of $n$ responses $y^{(i)}\sim\pi_\theta(\cdot\mid x)$ for the same prompt, with rewards $R_i=R(x,y^{(i)})$. Let $\bar R = \frac{1}{n}\sum_{j=1}^{n}R_j, \sigma_R = \sqrt{\frac{1}{n-1}\sum_{j=1}^{n}(R_j-\bar R)^2}$. The trajectory-level GRPO advantage is $A_i^{\mathrm{GRPO}} = \frac{R_i-\bar R}{\sigma_R+\varepsilon}.\label{eq:grpo_advantage}$Since $A_i^{\mathrm{GRPO}}$ is defined for the complete rollout, it is shared across all tokens in trajectory $i$. The corresponding on-policy surrogate is:
\begin{align}
    \mathcal{L}_{\mathrm{GRPO}}
    =
    -
    \frac{1}{n}
    \sum_{i=1}^{n}
    \frac{1}{L_i}
    \sum_{t=1}^{L_i}
    \operatorname{sg}
    \left[
        A_i^{\mathrm{GRPO}}
    \right]
    \log
    \pi_\theta
    \left(
        y_t^{(i)}\mid c_{i,t}
    \right).
    \label{eq:grpo_objective}
\end{align}
In practice, we optimize its PPO-style clipped surrogate using importance ratios with respect to the frozen behavior policy. When all trajectories receive the same reward, their GRPO advantages vanish.

\textbf{\method\ Motivation.}
OPD and GRPO provide complementary supervision. OPD supplies dense token-level teacher guidance but does not directly encode whether the complete trajectory is correct. GRPO captures trajectory-level correctness, but assigns the same advantage to every token and provides no learning signal on reward-uniform groups. Moreover, Figure~\ref{fig:blindness} shows that teacher--student disagreement alone does not reliably distinguish correct from incorrect trajectories or localize the decisive reasoning error. These observations motivate \method: the verifier determines \emph{which} trajectories receive additional guidance, while the teacher provides dense token-level update directions within the selected failed trajectories. Importantly, \method\ performs trajectory-level routing and does not attempt to identify the causal error token within a response.

\section{Methodology}
\label{sec:method}
As discussed in Section ~\ref{sec:preliminaries}, OPD and GRPO provide complementary supervision: OPD gives dense token-level guidance but is local and unbounded, while GRPO captures trajectory correctness but is coarse and vanishes on reward-uniform groups. We introduce \method, which combines both, using teacher guidance when outcome supervision is weak. Figure~\ref{fig:main_figure} summarizes the method.

\begin{figure}[t]
    \centering
    \includegraphics[width=\linewidth]{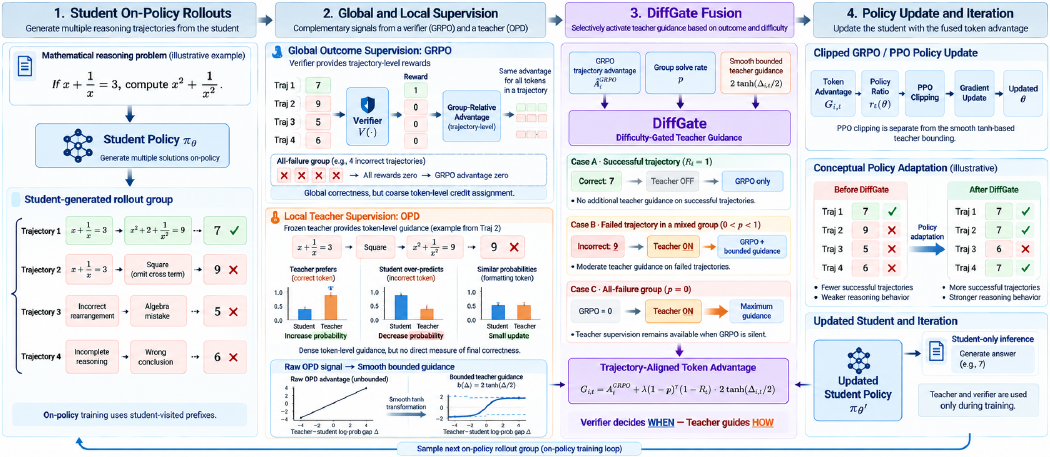}
    \caption{\small
    \textbf{Overview of \method.}
    The student generates on-policy trajectories that receive sequence-level correctness feedback from a verifier and token-level guidance from a frozen teacher. \method\ bounds the sampled-token teacher--student log-probability gap, applies teacher guidance only to failed trajectories, and scales its strength according to the group solve rate. The resulting token-level advantage combines local teacher guidance with the global GRPO outcome signal to update the student policy. See Algorithm~\ref{alg:method} for details.}
    \label{fig:main_figure}
    \vspace{-0.5cm}
\end{figure}

\textbf{Local versus Global Supervision.} OPD and RLVR admit a similar policy-gradient form,
\begin{align}
    \widehat{\nabla_\theta J}
    =
    \sum_t
    A_t\,
    \nabla_\theta
    \log\pi_\theta(y_t\mid c_t),
\end{align}
but differ in how the advantage is constructed. OPD provides the token-level advantage $A_{i,t}^{\mathrm{OPD}} = \log\pi_{\mathrm{T}}(y_t^{(i)}\mid c_{i,t}) - \log\pi_\theta(y_t^{(i)}\mid c_{i,t}),$ whereas GRPO assigns the same trajectory-level advantage $A_i^{\mathrm{GRPO}}$ to every token in rollout $i$. Thus, OPD provides \emph{dense but local} supervision, while GRPO provides \emph{global but coarse} feedback. However, the OPD advantage is an unbounded log-probability ratio: large teacher--student discrepancies can produce large token-level signals, potentially dominating the GRPO advantage. Moreover, the signals may disagree: teacher-preferred tokens can occur in incorrect trajectories, while correct trajectories may contain teacher-disfavored tokens.

\textbf{Bounded Local Teacher Guidance.} For each prompt $x$, we sample a group of $n$ trajectories
$\{y^{(1)},\ldots,y^{(n)}\}$ from the behavior policy $\pi_{\theta_{\mathrm{old}}}$. For token $y_t^{(i)}$ in trajectory $i$, with context
$c_{i,t}=(x,y_{<t}^{(i)})$, we define the sampled-token OPD gap as follows:
\begin{align}
    \Delta_{i,t}
    =
    \log \pi_{\mathrm{T}}
    (y_t^{(i)}\mid c_{i,t})
    -
    \log \pi_{\theta_{\mathrm{old}}}
    (y_t^{(i)}\mid c_{i,t}).
    \label{eq:local_gap}
\end{align}
Although $\Delta_{i,t}$ provides dense teacher supervision, it is unbounded and can attain large magnitude when the teacher and student strongly disagree. To prevent a small number of tokens from dominating the update, we apply a bounded transformation that keeps the OPD signal on a scale comparable to the GRPO advantage and defined as follows:
\begin{align}
    B_{i,t}
    =
    \phi(\Delta_{i,t})
    \coloneqq
    2\tanh\left(\frac{\Delta_{i,t}}{2}\right),
    \qquad
    B_{i,t}\in(-2,2).
    \label{eq:bounded_advantage}
\end{align}
The transformation preserves the sign of the original OPD advantage and is approximately linear around zero,
$\phi(\Delta)\approx\Delta$ for small $|\Delta|$, while smoothly saturating for large teacher--student discrepancies.

\textbf{Difficulty-Gated Trajectory Guidance.} Let $R_i\in\{0,1\}$ denote the verifier reward of trajectory $i$ and let $p=\frac{1}{n}\sum_{j=1}^{n}R_j \label{eq:solve_rate}$ denote the group solve rate. We use the standard group-relative advantage $A_i^{\mathrm{GRPO}}$ defined in Section ~\ref{sec:preliminaries} as the global outcome signal.

A central design choice is that teacher guidance should not be applied uniformly. In particular, we want the teacher to guide trajectories that fail the verifier, with stronger guidance on prompts that the student finds difficult. We therefore define the trajectory gate as follows:
\begin{align}
    w_i
    =
    (1-p)^\gamma(1-R_i),
    \label{eq:trajectory_gate}
\end{align}
where $\gamma\geq0$ controls the dependence on group difficulty. The factor $(1-R_i)$ removes teacher guidance from successful trajectories, while $(1-p)^\gamma$ increases its strength as the group's solving rate decreases. Combining the global and local signals gives the proposed token-level advantage as:
\begin{align}
    G_{i,t}
    =
    \underbrace{
        A_i^{\mathrm{GRPO}}
    }_{\text{global outcome}}
    +
    \underbrace{
        \lambda(1-p)^\gamma(1-R_i)
    }_{\text{trajectory gate}}
    \underbrace{
        2\tanh\left(\frac{\Delta_{i,t}}{2}\right)
    }_{\text{bounded local guidance}},
    \label{eq:diffgate_advantage}
\end{align}
where $\lambda\geq0$ controls the overall strength of teacher supervision.

The components serve complementary roles: \textsl{(1) Trajectory correctness:} $(1-R_i)$ activates teacher guidance only for failed trajectories. \textsl{(2) Group difficulty:} $(1-p)^\gamma$ scales teacher guidance more strongly when fewer rollouts succeed. \textsl{(3) Local direction:} $2\tanh(\Delta_{i,t}/2)$ provides a bounded token-level teacher signal. \textsl{(4) Global supervision:} $A_i^{\mathrm{GRPO}}$ provides trajectory-level outcome feedback whenever the rollout group contains reward variation.

Together, the verifier determines \emph{whether} correction is needed, the group solve rate determines \emph{how much} teacher guidance to apply, and the teacher determines \emph{how} to update the student locally. The gate operates at the trajectory level rather than the token level, since teacher--student disagreement alone does not reliably localize the exact reasoning error within a failed response.

The objective has useful boundary behavior. For a successful trajectory, $R_i=1$, and therefore 
\begin{align}
    G_{i,t}=A_i^{\mathrm{GRPO}},
\end{align}
so \method does not impose additional teacher pressure on a solution that already satisfies the verifier. Conversely, if all trajectories fail, then $p=0$ and $A_i^{\mathrm{GRPO}}=0$, giving 
\begin{align}
    G_{i,t} = 2\lambda \tanh\left(\frac{\Delta_{i,t}}{2}\right).
\label{eq:all_fail}
\end{align}
Thus, the teacher gate is maximal on all-failure groups, where GRPO provides no learning signal.

\textbf{Policy Optimization.}
We treat $G_{i,t}$ as a stop-gradient token-level advantage. For trajectories sampled from $\pi_{\theta_{\mathrm{old}}}$, we define the importance ratio $\rho_{i,t}(\theta)
=\frac{\pi_\theta(y_t^{(i)}\mid c_{i,t})}{
\pi_{\theta_{\mathrm{old}}}(y_t^{(i)}\mid c_{i,t})}.$ The student is optimized using the clipped policy surrogate as follows:
\begin{align}
    \mathcal{L}_{\method}(\theta)
    =
    -\frac{1}{n}
    \sum_{i=1}^{n}
    \frac{1}{L_i}
    \sum_{t=1}^{L_i}
    \min\Big(
        \rho_{i,t}(\theta)\,
        \operatorname{sg}[G_{i,t}],
        \operatorname{clip}
        \big(\rho_{i,t}(\theta),1-\epsilon_{\mathrm{clip}},1+\epsilon_{\mathrm{clip}}\big)\,
        \operatorname{sg}[G_{i,t}]
    \Big).
    \label{eq:diffgate_loss}
\end{align}
Here, $\operatorname{sg}[\cdot]$ denotes the stop-gradient operator, so gradients are taken only through the current policy $\pi_\theta$. Setting $\lambda=0$ recovers GRPO, while setting $\gamma=0$ removes group-difficulty weighting but retains teacher guidance on failed trajectories, yielding the additive GRPO$+$OPD baseline. These controls isolate the effects of teacher guidance and difficulty gating, respectively.

\textbf{Theoretical Analysis.} We establish three properties of \method. First, the bounded teacher signal admits a fixed-prefix policy-gradient interpretation through an $f$-divergence while limiting large teacher--student discrepancies. Second, difficulty gating provides guidance on all-failure groups, where GRPO is silent, and reduces its weight as the solve rate increases. Third, both the teacher contribution and combined token-level advantage are bounded. These results characterize how \method\ balances local teacher guidance and outcome supervision, without implying convergence of the full PPO objective. Proofs are provided in Appendix Section~\ref{app:theory}.

\section{Experiments}
\label{sec:experiments}
We evaluate \method on mathematics and code against both distillation-only and outcome-only baselines, focusing on four central questions: \textbf{(1)} Does combining outcome-level RLVR with token-level teacher guidance improve over OPD and GRPO alone? \textbf{(2)} Does the method improve out-of-domain transfer to unseen tasks? \textbf{(3)} When does teacher guidance contribute during training, and can generic regularization explain its gains? \textbf{(4)} Which design choices matter most, including difficulty gating, bounded teacher guidance, rollout group size, and teacher strength?

\textbf{Models and Training Data.} We use Qwen3-0.6B and Qwen3-1.7B students with a fixed Qwen3-8B teacher \citep{yang2025qwen3}. We train on \textsc{DeepMath} for mathematics \citep{he2026deepmath} and \textsc{TACO} for code \citep{li2023taco}, using answer verification and execution-based tests, respectively.

\textbf{Evaluation Benchmarks.} For mathematics, we evaluate on five challenging benchmarks: \textsc{Minerva-Math}~\citep{lewkowycz2022solving}, \textsc{OlympiadBench}~\citep{he2024olympiadbench}, \textsc{AIME24}~\citep{aime24}, \textsc{AIME25}~\citep{aime25}, and \textsc{HMMT25}~\citep{dekoninck2026matharena}. We exclude easier benchmarks such as \textsc{GSM8K}~\citep{cobbe2021training}, \textsc{MATH500}~\citep{lightman2024lets}, and \textsc{AMC23}~\citep{mathai2023amc23} from the main evaluation. For code generation, we use \textsc{MBPP+} and \textsc{HumanEval+}~\citep{liu2023evalplus}, together with \textsc{LiveCodeBench}~\citep{jain2025livecodebench}. To evaluate out-of-distribution transfer, we directly test the mathematics-trained models on science benchmarks without any additional science training, using \textsc{GPQA}~\citep{rein2023gpqa}, \textsc{MMLU-STEM}~\citep{hendrycks2020measuring}, and \textsc{OlympiadBench-Physics}~\citep{he2024olympiadbench}.

\begin{table}[t]
    \centering
    \small
    \renewcommand{\arraystretch}{1.12}
    \setlength{\tabcolsep}{3.2pt}
    \resizebox{\textwidth}{!}{%
    \begin{tabular}{llcccccccccc|cc|cc}
        \toprule
        \rowcolor{headergray}
        \multirow{2}{*}{\textbf{Student}}
        & \multirow{2}{*}{\textbf{Method}}
        & \multicolumn{2}{c}{\textbf{Minerva}}
        & \multicolumn{2}{c}{\textbf{Olympiad}}
        & \multicolumn{2}{c}{\textbf{AIME24}}
        & \multicolumn{2}{c}{\textbf{AIME25}}
        & \multicolumn{2}{c}{\textbf{HMMT25}}
        & \multicolumn{2}{c}{\textbf{Average}}
        & \multicolumn{2}{c}{\textbf{Science (OOD)}} \\
        
        \cmidrule(lr){3-4}\cmidrule(lr){5-6}\cmidrule(lr){7-8}
        \cmidrule(lr){9-10}\cmidrule(lr){11-12}\cmidrule(lr){13-14}\cmidrule(lr){15-16}
        \rowcolor{headergray}
        & & \textit{avg@8} & \textit{pass@8}
        & \textit{avg@8} & \textit{pass@8}
        & \textit{avg@8} & \textit{pass@8}
        & \textit{avg@8} & \textit{pass@8}
        & \textit{avg@8} & \textit{pass@8}
        & \textit{avg@8} & \textit{pass@8}
        & \textit{avg@8} & \textit{pass@8} \\
        \midrule

        \multirow{8}{*}{\textbf{0.6B}} & Base
        & 10.7 & 27.7 & 15.1 & 37.9
        & 2.5 & 6.6 & 0.8 & 6.8
        & 0.4 & 3.3 & 5.9 & 16.4
        & 21.7 & 48.8 \\

         & OPD (forward KL) 
        & 14.1 & 35.7 & 23.5 & 49.8
        & 3.8 & 15.6 & 3.8 & 16.4
        & 0.8 & 7.6 & 9.2 & 25.0
        & 29.4 & \textbf{59.1} \\

         & OPD (reverse KL)
        & 18.2 & 36.7 & 29.7 & \underline{54.7}
        & 2.5 & 15.4 & 7.9 & \underline{19.9}
        & 1.7 & 9.1 & 12.0 & 27.2
        & 30.1 & 58.2 \\

         & EOPD
        & 17.1 & 35.2 & 29.9 & 53.9
        & \underline{6.2} & 17.8 & 5.4 & 15.1
        & \underline{2.1} & 6.3 & 12.2 & 25.7
        & 30.2 & 58.1 \\

         & Uni-OPD 
        & 18.9 & \underline{36.9} & \underline{30.5} & \textbf{55.8}
        & \underline{6.2} & 15.9 & \underline{8.8} & \textbf{24.0}
        & 0.8 & 5.0 & 13.0 & 27.5
        & \underline{30.6} & 57.7 \\

         & SFT on teacher traces
        & 10.9 & 31.5 & 19.6 & 45.6
        & 5.0 & 12.2 & 2.5 & 11.4
        & 0.4 & 3.3 & 7.7 & 20.8
        & -- & -- \\

         & GRPO 
        & \textbf{20.0} & 34.9 & \textbf{31.6} & 51.7
        & \textbf{6.7} & \textbf{24.0} & \textbf{9.2} & \underline{19.9}
        & \textbf{3.8} & \underline{11.6} & \textbf{14.2} & \underline{28.4}
        & 22.8 & 50.1 \\

         & \textbf{\method}
        & \underline{19.4} & \textbf{38.4} & 30.3 & 54.6
        & \underline{6.2} & \underline{21.1} & \underline{8.8} & 19.2
        & \textbf{3.8} & \textbf{14.1} & \underline{13.7} & \textbf{29.5}
        & \textbf{31.2} & \underline{58.3} \\

        \midrule

        \multirow{8}{*}{\textbf{1.7B}} & Base
        & 28.6 & 46.6 & 38.1 & 63.2
        & \underline{14.6} & 33.7 & 10.0 & 21.5
        & 4.2 & \underline{14.2} & 19.1 & 35.8
        & 27.5 & 56.5 \\

         & OPD (forward KL) 
        & 25.0 & 47.4 & 31.2 & 59.6
        & 10.8 & 32.5 & 6.7 & 19.0
        & 2.1 & 9.1 & 15.2 & 33.5
        & 35.8 & \textbf{63.0} \\

         & OPD (reverse KL)
        & \underline{29.9} & \underline{49.6} & 39.0 & 63.1
        & 12.5 & 32.2 & \underline{11.7} & 26.1
        & 3.3 & 10.9 & 19.3 & 36.4
        & \underline{36.7} & 62.1 \\

         & EOPD 
        & 28.9 & 46.7 & 38.2 & 62.6
        & 12.5 & 36.3 & 10.4 & 23.8
        & 2.5 & 13.0 & 18.5 & 36.5
        & 36.2 & \textbf{63.0} \\

         & Uni-OPD 
        & 28.7 & 47.4 & 38.1 & \underline{63.9}
        & 13.8 & \underline{38.0} & 10.0 & 27.1
        & \underline{4.6} & \textbf{15.2} & 19.0 & \underline{38.3}
        & 36.5 & \textbf{63.0} \\

         & SFT on teacher traces
        & 26.0 & 48.0 & 33.1 & 59.8
        & 6.2 & 20.8 & 7.9 & 23.8
        & 1.7 & 8.2 & 15.0 & 32.1
        & -- & -- \\

         & GRPO
        & \textbf{31.2} & 46.9 & \textbf{41.8} & 61.1
        & \textbf{15.4} & 28.9 & \textbf{14.6} & \underline{27.2}
        & 4.2 & 11.9 & \textbf{21.4} & 35.2
        & 14.3 & 35.4 \\

         & \textbf{\method}
        & \textbf{31.2} & \textbf{52.9} & \underline{41.1} & \textbf{64.2}
        & \textbf{15.4} & \textbf{39.6} & \underline{11.7} & \textbf{28.1}
        & \textbf{5.8} & 10.8 & \underline{21.1} & \textbf{39.1}
        & \textbf{39.2} & \underline{62.4} \\

        \midrule

        \rowcolor{teachergray}
         & \textit{Teacher 8B}
        & 38.3 & 52.7 & 45.1 & 71.9
        & 23.3 & 45.7 & 17.1 & 36.9
        & 7.1 & 17.5 & 26.2 & 44.9
        & 47.6 & 69.7 \\

        \bottomrule
    \end{tabular}}
    \caption{\small \textbf{Mathematical reasoning results.} We report avg@8 and pass@8, with GRPO serving as the outcome-only baseline. \textbf{Bold} and \underline{underlined} values indicate the best and second-best student results, respectively. The science (OOD) columns report average performance on GPQA, MMLU-STEM, and OlympiadBench-Physics (Table~\ref{tab:science}), evaluated using the mathematics-trained students without additional science training.}
    \label{tab:math}
\end{table}

\textbf{Metrics.} We report \emph{avg@8} and \emph{pass@8}. Avg@8 measures mean correctness across sampled responses, while pass@8 measures whether at least one of eight samples is correct, capturing solution coverage. Both metrics are estimated from the same pool of $16$ generations per problem. Unless otherwise stated, we use temperature $0.6$, top-$p$ $1.0$ (no nucleus truncation), repetition penalty $1.15$, and a maximum generation length of $16{,}384$ tokens (More details are in Appendix Section \ref{appendix:implementation_details}).

\textbf{Baselines.} We compare \method\ with forward- and reverse-KL on-policy distillation (OPD)~\citep{lu2025onpolicydistillation}, supervised fine-tuning (SFT) on teacher-generated solutions, recent OPD frameworks such as EOPD~\citep{jin2026entropy}, Uni-OPD~\citep{hou2026uni}, and GRPO~\citep{shao2024deepseekmath}. GRPO serves as the matched outcome-only baseline and is recovered from \method\ by setting $\lambda=0$. Both methods use the same training data, rollout configuration, optimizer, and training-step budget.

\subsection{Main Results}
\label{sec:main_results}
Tables~\ref{tab:math} and~\ref{tab:code} report the main results on mathematical reasoning, code generation, and out-of-distribution (OOD) transfer to science benchmarks. We summarize three key findings below.

\textbf{Mathematical reasoning and code generation.} Across both 0.6B and 1.7B students, \method\ outperforms all OPD baselines in both avg@8 and pass@8, while also improving pass@8 over the GRPO baseline in all four settings. On mathematics, pass@8 improves over GRPO by $1.1$ and $3.9$ points, with only small changes in avg@8 ($-0.5$ and $-0.3$ points). On code, \method\ improves GRPO on both metrics: avg@8 increases by $1.7$ and $1.8$ points, while pass@8 increases by $1.6$ and $5.7$ points. Overall, \method\ combines the strengths of distillation and outcome-based learning, consistently improving solution coverage while outperforming OPD baselines across both domains.

\textbf{Out-of-distribution.} We evaluate the mathematics-trained students on GPQA, MMLU-STEM, and OlympiadBench-Physics without additional science training. \method\ achieves average science avg@8 scores of $31.2$ and $39.2$ for the 0.6B and 1.7B students, improving over the corresponding base models ($21.7$ and $27.5$) and the best OPD baselines ($30.6$ and $36.7$). The gains are driven primarily by GPQA and MMLU-STEM, while OlympiadBench-Physics changes relatively little. These results suggest that the benefits of difficulty-gated teacher guidance extend beyond the mathematics training domain, although the magnitude of transfer varies across science tasks (Table~\ref{tab:science}; Appendix~\ref{app:science}).

\textbf{Scaling to a 4B student.}
We further evaluate \method\ with a Qwen3-4B student, using a Qwen3-14B teacher for mathematics and a Qwen3-8B teacher for code. On mathematics, \method\ achieves $30.0$ avg@8 and $47.2$ pass@8, outperforming the strongest OPD baseline by $2.9$ avg@8 and $2.2$ pass@8 points (Table~\ref{tab:math4b}). On code, \method\ reaches $57.3$ avg@8 and $70.5$ pass@8, improving over the strongest OPD baseline by $2.5$ and $2.8$ points, respectively (Table~\ref{tab:code4b}). These results suggest that the benefits of combining outcome supervision with difficulty-gated teacher guidance extend to the larger 4B student setting, while the magnitude of the gains varies across tasks (Appendix Section~\ref{app:scale4b}).

\begin{table}[t]
    \centering
    \small
    \renewcommand{\arraystretch}{1.12}
    \setlength{\tabcolsep}{3.5pt}
    \resizebox{\textwidth}{!}{%
    \begin{tabular}{llcccccc|cc}
        \toprule
        \rowcolor{headergray}
        \multirow{2}{*}{\textbf{Student}}
        & \multirow{2}{*}{\textbf{Method}}
        & \multicolumn{2}{c}{\textbf{MBPP+}}
        & \multicolumn{2}{c}{\textbf{HumanEval+}}
        & \multicolumn{2}{c}{\textbf{LCB}}
        & \multicolumn{2}{c}{\textbf{Average}} \\

        \cmidrule(lr){3-4}\cmidrule(lr){5-6}
        \cmidrule(lr){7-8}\cmidrule(lr){9-10}
        \rowcolor{headergray}
        & & \textit{avg@8} & \textit{pass@8}
        & \textit{avg@8} & \textit{pass@8}
        & \textit{avg@8} & \textit{pass@8}
        & \textit{avg@8} & \textit{pass@8} \\
        \midrule

        \multirow{7}{*}{\textbf{0.6B}} & Base
        & 30.1 & 53.4 & 22.6 & 45.5
        & 6.1 & 14.1 & 19.6 & 37.7 \\

         & OPD (forward KL) 
        & 37.8 & 63.7 & 31.4 & 60.8
        & 7.7 & 18.7 & 25.6 & 47.7 \\

         & OPD (reverse KL)
        & \underline{46.0} & \textbf{65.8} & 33.4 & 56.5
        & 10.4 & 20.8 & 29.9 & 47.7 \\

         & EOPD
        & \underline{46.0} & 65.3 & 38.5 & 58.8
        & 10.3 & 21.5 & \underline{31.6} & \underline{48.6} \\

         & Uni-OPD 
        & 45.2 & \underline{65.5} & 36.9 & 58.1
        & 10.8 & 20.5 & 31.0 & 48.0 \\

         & GRPO
        & 40.9 & 58.5 & \underline{40.1} & \textbf{62.2}
        & \textbf{13.1} & \textbf{23.8} & 31.3 & 48.2 \\

        \rowcolor{methodblue}
         & \textbf{\method}
        & \textbf{46.1} & \underline{65.5} & \textbf{41.9} & \underline{61.8}
        & \underline{11.0} & \underline{22.0} & \textbf{33.0} & \textbf{49.8} \\

        \midrule

        \multirow{8}{*}{\textbf{1.7B}} & Base
        & 51.6 & 61.5 & 48.2 & 61.6
        & 14.7 & 25.5 & 38.2 & 49.5 \\

         & OPD (forward KL) 
        & 47.4 & 69.6 & 48.9 & 78.6
        & 12.4 & 27.7 & 36.2 & 58.6 \\

         & OPD (reverse KL)
        & 54.4 & 70.0 & 59.6 & \underline{81.8}
        & 15.7 & 28.5 & 43.2 & 60.1 \\

         & EOPD 
        & \textbf{55.9} & \textbf{72.1} & 58.0 & 81.3
        & 16.1 & \underline{29.7} & 43.3 & \underline{61.0} \\

         & Uni-OPD 
        & 53.1 & 69.3 & \underline{60.5} & 78.8
        & 16.3 & 28.9 & 43.3 & 59.0 \\

         & SFT on teacher traces
        & 48.7 & 69.6 & 50.6 & 79.2
        & 13.6 & 26.9 & 37.6 & 58.6 \\

         & GRPO
        & \underline{55.5} & 65.6 & 55.8 & 70.2
        & \textbf{19.0} & \textbf{31.4} & \underline{43.4} & 55.7 \\

        \rowcolor{methodblue}
         & \textbf{\method}
        & \underline{55.5} & \underline{70.4} & \textbf{63.2} & \textbf{84.1}
        & \underline{16.9} & 29.5 & \textbf{45.2} & \textbf{61.4} \\

        \midrule
        \rowcolor{teachergray}
         & \textit{Teacher 8B}
        & 70.8 & 81.6 & 80.0 & 90.2
        & 28.1 & 43.1 & 59.6 & 71.7 \\

        \bottomrule
    \end{tabular}}
    \caption{\small \textbf{Code generation results.} We report avg@8 and pass@8 on MBPP+, HumanEval+, and LiveCodeBench. \textbf{Bold} and \underline{underlined} values indicate the best and second-best student results, respectively.}
    \label{tab:code}
\end{table}

\subsection{Ablation Studies}
\label{sec:ablation}
To validate the design of \method, we conduct controlled ablations against matched baselines, isolating the effects of difficulty gating, bounded teacher guidance, rollout group size, and teacher strength.

\textbf{Difficulty gating.}
Setting $\gamma=0$ removes difficulty weighting and yields a flat GRPO+OPD objective. Across all four settings, difficulty gating improves avg@8 by $1.3$--$2.4$ points, with gains of $2.4$ points at both mathematics model sizes and $1.3$--$1.4$ points on code (Table~\ref{tab:gating}). Teacher guidance is therefore more effective when concentrated on harder groups than when applied uniformly, and most of what the gate contributes is on the domain where the reward signal becomes dense early.

\textbf{Bounded teacher guidance.} Replacing $2\tanh(\Delta_{i,t}/2)$ with the raw OPD log-probability gap lowers mathematics avg@8 by $1.0$--$1.2$ points and pass@8 by $1.1$--$1.3$ points, while lowering code avg@8 by $1.1$--$1.5$ points and pass@8 by $0.8$--$1.1$ points. The raw signal also increases median gradient norms by $1.4$--$3.3\times$ and produces substantially larger gradient spikes during training (Table~\ref{tab:bound}). Bounding therefore limits extreme teacher updates, stabilizes optimization, and improves held-out accuracy.

\textbf{Smaller rollout groups strengthen the benefit of teacher guidance.} Reducing the group size from $n=16$ to $n=4$ makes reward-uniform groups more frequent during training. GRPO drops by $1.9$ avg@8 points on average across the four settings, compared with only $1.2$ for \method. On 0.6B mathematics, \method moves from $0.5$ points below GRPO at $n=16$ to $1.2$ points above it at $n=4$ (Table~\ref{tab:n4}; Appendix~\ref{app:group_size}). These results are consistent with the intended role of the teacher term, suggesting that it can be especially helpful when group-relative reward feedback is less informative.

\textbf{The method is not highly sensitive to the exact gate strength.} Sweeps over $\lambda$ and $\gamma$ on the 1.7B student show that relatively small teacher weights are sufficient in practice (see Appendix~\ref{app:hpsweep}, Figure~\ref{fig:hpsweep}). Mathematics is nearly flat for $\lambda\in\{0.25,0.5,1\}$ and degrades for larger values, while code peaks near $\lambda=0.5$. Removing difficulty weighting clearly hurts mathematics, while its effect on code is smaller, whereas varying $\gamma\in\{1,2,3\}$ produces only small, non-monotonic changes across settings. Overall, whether teacher guidance is gated matters more than the precise gate sharpness.

\textbf{Why is naive GRPO+OPD insufficient?}
Appendix Table~\ref{tab:gating} compares \method\ with a matched additive GRPO+OPD baseline that uses the same bounded teacher signal on failed trajectories but removes difficulty weighting by setting $\gamma=0$. On mathematics, difficulty gating improves avg@8 by $2.4$ points at both model sizes and pass@8 by $2.8$ and $4.1$ points for the 0.6B and 1.7B students, respectively. On code, the gains are smaller but remain positive: avg@8 improves by $1.3$ and $1.4$ points, while pass@8 improves by $0.1$ and $1.4$ points. These results indicate that teacher guidance alone provides much of the benefit on code, whereas adapting its strength to group difficulty is substantially more important on mathematics. Additional selector and additive controls in Appendix~\ref{app:selector} further separate failed-only routing, difficulty weighting, and bounding. In particular, the comparison against all-rollout bounded and raw GRPO+OPD objectives shows that applying teacher guidance uniformly is not sufficient to reproduce \method's gains, especially on code.

\begin{figure}[t]
    \centering
    \includegraphics[width=\linewidth]{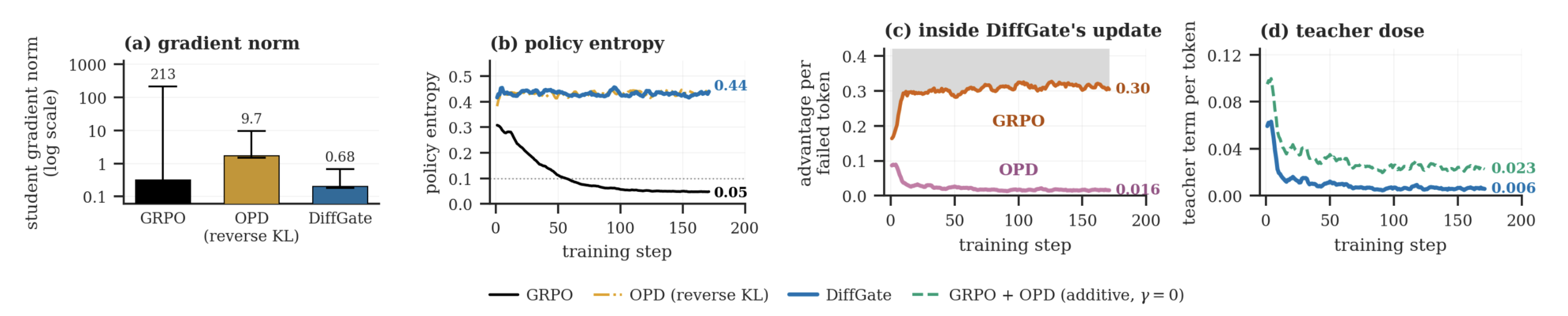}
    \vspace{-0.6cm}
    \caption{\small \textbf{Why \method\ helps. Maths, 1.7B student; all methods use the same data, teacher, and step budget.} \emph{(a)} Student gradient norm over training: the bar denotes the median and the whisker spans minimum to maximum, shown on a log scale. \emph{(b)} Policy entropy, shown as a five-step rolling mean; the dotted line marks $0.1$. \emph{(c)} The two components of \method's update on failed rollouts: the applied teacher term per token (dark blue) and the magnitude of the GRPO advantage per token (light blue). The GRPO advantage is zero for an all-failure group and, with $n=16$, is approximately $0.25$ or larger for groups with $1$--$8$ successes and approximately $1.10$ or larger for groups with $9$ or more successes, ignoring the small $\varepsilon$ term; values are weighted by the measured frequency of each group type. The grey region indicates this approximate lower-bound contribution. \emph{(d)} Applied teacher term per response token under difficulty-gated and additive weighting on the same rollouts.}
    \label{fig:why_diffgate}
\end{figure}

\subsection{Why Does \method Help?}
\label{sec:analysis}
We examine the training dynamics to understand when teacher guidance is most useful and how it interacts with the GRPO signal. We focus on reward-uniform groups, teacher--student disagreement, policy entropy, and gradient behavior (Figure~\ref{fig:why_diffgate}; Appendix Sections~\ref{app:adv_dynamics}--\ref{app:one_view}).

\textbf{Teacher guidance remains available when GRPO has little or no signal.} Early in training, $14$--$36\%$ of prompts form all-failure groups, for which the group-relative GRPO advantage is exactly zero (Table~\ref{tab:adv_dynamics}). These groups quickly become rare on mathematics but persist on code; for example, about $20\%$ of 0.6B code prompts remain all-failure near the end of training. \method continues to provide bounded token-level supervision on these failed trajectories. Accordingly, on 0.6B code, the teacher--student discrepancy on failed rollouts decreases from $1.04$ to $0.37$ nats/token under \method, compared with $0.56$ under GRPO; at 1.7B, it falls to $0.19$ versus $0.30$. On mathematics, GRPO alone already closes much of this gap. This is consistent with the larger gains observed on code, where the reward blind spot persists longer. Gating is what keeps the teacher there: on all-fail groups the gate is one and \method\ coincides with the additive GRPO$+$OPD objective; as the blind spot closes, the gate turns the teacher down $10\times$, from $0.061$ to $0.006$ per token, while the additive rule still applies four times that at the end, most of it on mixed-success groups where GRPO already has a nonzero signal (see Figure~\ref{fig:why_diffgate}c,d), and it trails \method\ (see Table~\ref{tab:gating}).

\textbf{\method exhibits more stable gradient behavior and higher policy entropy.} This effect is clear for the 1.7B student. Late in training, GRPO entropy falls below $0.1$, coinciding with student gradient spikes as large as $213$ on mathematics and $65$ on code. In contrast, \method maintains higher policy entropy and limits the corresponding peaks to $0.7$ and $2.6$, respectively (Figure~\ref{fig:why_diffgate}a,b; Table~\ref{tab:opd_side}). Bounding the teacher signal is important for this stability: replacing $2\tanh(\Delta/2)$ with the raw log-probability gap increases median gradient norms by $1.4$--$3.3\times$ and, on 0.6B code, increases the maximum gradient norm from $4.3$ to $342$. Thus, the bounded teacher term supplies useful local guidance without allowing large teacher--student disagreements in optimization.

\begin{wrapfigure}{r}{0.44\columnwidth}
    \vspace{-0.3cm}
    \centering
    \includegraphics[width=\linewidth]{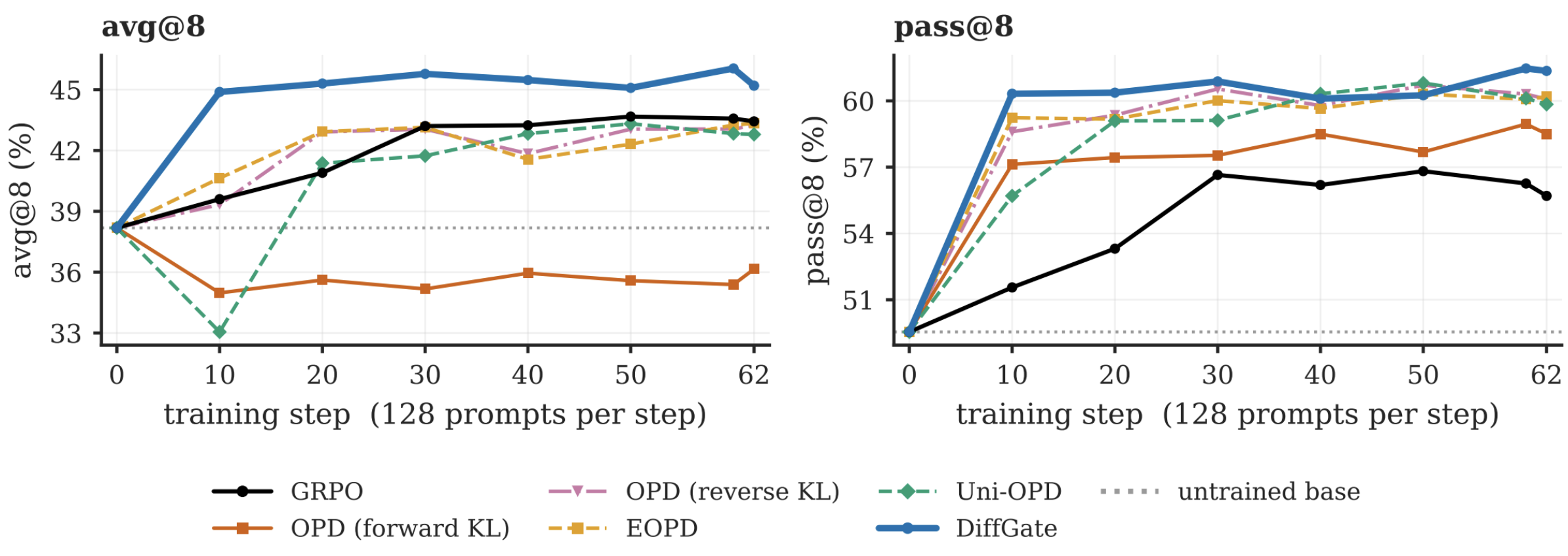}
    \vspace{-0.4cm}
    \caption{\small \textbf{Performance over training.} avg@8 (top) and pass@8 (bottom) across training checkpoints for the 1.7B student on the code benchmarks in Table~\ref{tab:code}. Step~$0$ denotes the base model (dotted line), enabling comparison of how each method improves accuracy and solution coverage.}
    \label{fig:accuracy_curve}
    \vspace{-0.4cm}
\end{wrapfigure}

\textbf{Performance over training.} On the 1.7B code benchmarks, \method\ shows an early and persistent advantage in solution coverage. In the matched checkpoint, DiffGate has higher pass@8 than GRPO at the evaluated code checkpoints, while reaching most of its avg@8 improvement within the first few updates. Compared with OPD baselines, \method\ also improves more rapidly and reaches stronger final performance, suggesting that combining outcome supervision with difficulty-gated teacher guidance is more effective than distillation alone. Overall, the gains emerge early and remain consistent across training rather than appearing only at the final checkpoint (Figure~\ref{fig:accuracy_curve}; Appendix ~\ref{app:stepwise}).

\textbf{Preserving policy diversity is consistent with improved solution coverage.} On mathematics, GRPO achieves a higher training solve rate but converges to a substantially sharper policy, consistent with prior observations of entropy collapse during RLVR training~\citep{wen2026reinforcement}. This advantage largely disappears on held-out avg@8, whereas \method improves pass@8. The higher entropy under \method is therefore consistent with retaining a broader set of plausible solution modes, increasing the chance of solving a problem across multiple samples. This entropy gap appears in all four settings (see Figure~\ref{fig:why_diffgate}b; Figure~\ref{fig:reward_dynamics}), alongside higher pass@8 (see Tables~\ref{tab:math} and~\ref{tab:code}). 

\section{Conclusion and Limitations}
We identify a local--global mismatch between OPD and RLVR: OPD provides dense token-level guidance but is only weakly tied to trajectory correctness, while RLVR captures task success with coarse credit assignment. We introduce \method, which integrates bounded, difficulty-gated teacher guidance into GRPO, concentrating supervision on failed and difficult trajectories. Across Qwen3 students, \method improves solution coverage and shows more robust behavior than OPD and GRPO. 

\textbf{Limitations.} We evaluate \method\ on mathematical reasoning and code generation across three student model sizes. While the results demonstrate its applicability to these settings, its effectiveness on other tasks, such as agentic reasoning and long-horizon decision-making, remains unexplored. Computational constraints limit our evaluation to students of up to 4B parameters.



\bibliography{iclr2027_conference}
\bibliographystyle{iclr2027_conference}


\newpage

\etocdepthtag.toc{mtmain}

\appendix
\etocdepthtag.toc{mtappendix}

\etocsettagdepth{mtmain}{none}
\etocsettagdepth{mtappendix}{subsection}

\etocsettocstyle{\section*{Appendix Contents}\vspace{0.5em}}{}

\onecolumn
\newpage
\tableofcontents
\newpage

\section{Notation and Conventions}
Table~\ref{tab:notations} provides a compact reference for the notation used throughout the paper, covering the policy setup, outcome-based supervision, teacher guidance, and the key components of \method.

\begin{table*}[h]
    \centering
    \small
    \renewcommand{\arraystretch}{1.12}
    \setlength{\tabcolsep}{6pt}
    \caption{\textbf{Summary of notation.}
    Symbols used for the policy setup, outcome supervision,
    teacher guidance, and \method\ optimization.}
    \label{tab:notations}
    \vspace{-0.2cm}

    \begin{tabular}{p{0.22\textwidth} p{0.72\textwidth}}
        \toprule
        \rowcolor{headergray}
        \textbf{Notation} & \textbf{Description} \\
        \midrule

        \rowcolor{sectiongray}
        \multicolumn{2}{l}{\textbf{General setup}} \\
        \midrule

        $\mathcal D$
        & Distribution over training prompts. \\

        $x$
        & Prompt sampled from $\mathcal D$. \\

        $\pi_\theta$
        & Student policy parameterized by $\theta$. \\

        $\pi_{\theta_{\mathrm{old}}}$
        & Frozen behavior policy used to generate trajectories
        for the current policy update. \\

        $\pi_{\mathrm T}$
        & Fixed teacher policy. \\

        $y^{(i)}=(y_1^{(i)},\ldots,y_{L_i}^{(i)})$
        & $i$-th student-generated trajectory of length $L_i$. \\

        $c_{i,t}=(x,y_{<t}^{(i)})$
        & Prefix context preceding token $y_t^{(i)}$
        in trajectory $i$. \\

        $n$
        & Number of trajectories sampled per prompt. \\

        $\operatorname{sg}[\cdot]$
        & Stop-gradient operator; its argument is treated as
        constant during differentiation. \\

        \midrule

        \rowcolor{sectiongray}
        \multicolumn{2}{l}{\textbf{Trajectory-level outcome supervision}} \\
        \midrule

        $R_i=R(x,y^{(i)})$
        & Binary verifier reward for trajectory $i$:
        $R_i=1$ if the response passes the verifier and
        $R_i=0$ otherwise. \\

        $\bar R$
        & Mean reward within a rollout group,
        $\displaystyle
        \bar R=\frac{1}{n}\sum_{i=1}^{n}R_i$. \\

        $\sigma_R$
        & Sample standard deviation of rewards within the rollout group,
        $\displaystyle
        \sigma_R=
        \sqrt{
        \frac{1}{n-1}
        \sum_{i=1}^{n}(R_i-\bar R)^2
        }$. \\

        $p$
        & Group solve rate,
        $\displaystyle
        p=\frac{1}{n}\sum_{i=1}^{n}R_i$.
        For binary rewards, $p=\bar R$. \\

        $A_i^{\mathrm{GRPO}}$
        & Group-relative trajectory advantage,
        $\displaystyle
        A_i^{\mathrm{GRPO}}
        =
        \frac{R_i-\bar R}{\sigma_R+\varepsilon}$. \\

        $\varepsilon$
        & Positive constant used to stabilize reward normalization
        when $\sigma_R$ is small or zero. \\

        \midrule

        \rowcolor{sectiongray}
        \multicolumn{2}{l}{\textbf{Token-level teacher guidance}} \\
        \midrule

        $\Delta_{i,t}$
        & Sampled-token teacher--student log-probability gap,
        $\displaystyle
        \Delta_{i,t}
        =
        \log\pi_{\mathrm T}(y_t^{(i)}\mid c_{i,t})
        -
        \log\pi_{\theta_{\mathrm{old}}}
        (y_t^{(i)}\mid c_{i,t})$. \\

        $B_{i,t}$
        & Bounded teacher signal,
        $\displaystyle
        B_{i,t}
        =
        2\tanh\!\left(
            \frac{\Delta_{i,t}}{2}
        \right)
        \in(-2,2)$. \\

        \midrule

        \rowcolor{sectiongray}
        \multicolumn{2}{l}{\textbf{\method\ components and optimization}} \\
        \midrule

        $w_i$
        & Trajectory-level teacher gate,
        $\displaystyle
        w_i=(1-p)^\gamma(1-R_i)$,
        which activates teacher guidance only on failed trajectories
        and scales it by group difficulty. \\

        $\gamma$
        & Nonnegative exponent controlling the dependence of
        teacher guidance on group difficulty. Setting $\gamma=0$
        removes difficulty weighting while retaining
        failed-trajectory selection. \\

        $\lambda$
        & Nonnegative coefficient controlling the overall strength
        of teacher guidance. Setting $\lambda=0$ recovers GRPO. \\

        $G_{i,t}$
        & Combined token-level advantage used by \method,
        $\displaystyle
        G_{i,t}
        =
        A_i^{\mathrm{GRPO}}
        +
        \lambda w_i B_{i,t}$. \\

        $\rho_{i,t}(\theta)$
        & Token-level importance ratio,
        $\displaystyle
        \rho_{i,t}(\theta)
        =
        \frac{
            \pi_\theta(y_t^{(i)}\mid c_{i,t})
        }{
            \pi_{\theta_{\mathrm{old}}}
            (y_t^{(i)}\mid c_{i,t})
        }$. \\

        $\epsilon_{\mathrm{clip}}$
        & PPO clipping parameter used in the clipped policy surrogate. \\

        \bottomrule
    \end{tabular}
\end{table*}

\section{Failure Modes of On-Policy Distillation}
\label{sec:opd_failure}

\textbf{Analysis setup.} We next examine whether teacher--student disagreement can itself indicate when distillation is needed. Using a Qwen3-1.7B student and a frozen Qwen3-8B teacher, we evaluated rollouts from MATH500, DeepMath, and MBPP+. For each trajectory, we measured the average token-level teacher disagreement and compared it with verifier correctness using AUROC, where $0.5$ indicates chance-level discrimination.

\textbf{Teacher disagreement is largely correctness-blind.} On mathematics, correct and incorrect trajectories show almost identical levels of disagreement. On MATH500, the averages are $0.150$ and $0.148$ nats/token, giving an AUROC of $0.50$; on DeepMath, they are $0.188$ and $0.184$, with AUROC $0.49$. Thus, the magnitude of the local teacher signal provides essentially no useful information about whether a mathematical trajectory is correct. On MBPP+, incorrect trajectories have somewhat larger disagreement than correct ones ($0.551$ versus $0.478$ nats/token), but the signal remains only moderately predictive, with AUROC $0.60$.

\textbf{Large disagreement does not necessarily indicate the reasoning error.} Aggregate disagreement can appear more predictive, but it is also strongly affected by response length: longer trajectories naturally accumulate more disagreement, and response length alone is already predictive of correctness. Moreover, the largest teacher--student discrepancies frequently occur on ordinary words or identifiers rather than on tokens directly associated with the reasoning failure. Hence, a large local disagreement does not reliably tell us either \emph{whether} the trajectory is wrong or \emph{where} the error occurs.

\textbf{Implication for \method.} These observations motivate explicitly separating two complementary roles. The verifier provides trajectory-level information about \emph{when} correction is needed, while the teacher provides dense token-level information about \emph{how} the student should be updated. \method therefore activates teacher guidance only on failed trajectories, where additional corrective supervision is most useful, rather than applying the teacher signal uniformly to both correct and incorrect rollouts.

\section{Theoretical Analysis}
\label{app:theory}

We establish three properties of \method. First, the bounded teacher
signal admits an exact fixed-prefix policy-gradient interpretation through
an $f$-divergence. Second, the difficulty gate increases the relative
importance of teacher guidance on harder groups and supplies a learning
signal on all-failure groups where GRPO is silent. Third, both the teacher
contribution and the resulting token-level advantage are bounded. These
results characterize the learning signal used by \method; they do not
establish convergence of the complete clipped PPO objective.

\subsection{Bounded Teacher Guidance as a Policy Gradient}
\label{sec:theory_bounded_pg}

\textbf{Problem Setup.}
Consider a fixed rollout prefix $c$ and a finite token vocabulary
$\mathcal V$. Let
\begin{align}
    p_{\mathrm{old}}(a)
    &\coloneqq
    \pi_{\theta_{\mathrm{old}}}(a\mid c),
    &
    p_\theta(a)
    &\coloneqq
    \pi_\theta(a\mid c),
    &
    q(a)
    &\coloneqq
    \pi_{\mathrm T}(a\mid c),
    \qquad a\in\mathcal V,
\end{align}
where $p_{\mathrm{old}}$ is the frozen behavior policy, $p_\theta$ is the student being optimized, and $q$ is the fixed teacher. We assume positive probability on every token. Consistent with Equation~\eqref{eq:local_gap}, define
\begin{align}
    \Delta_{\mathrm{old}}(a)
    &\coloneqq
    \log q(a)-\log p_{\mathrm{old}}(a),
\end{align}
and its bounded transformation
\begin{align}
    B_{\mathrm{old}}(a)
    &\coloneqq
    2\tanh\left(
        \frac{\Delta_{\mathrm{old}}(a)}{2}
    \right)
    =
    2\frac{q(a)-p_{\mathrm{old}}(a)}
           {q(a)+p_{\mathrm{old}}(a)}.
    \label{eq:theory_bounded_signal}
\end{align}
Hence $|B_{\mathrm{old}}(a)|<2$, and its sign agrees with the
teacher--student log-probability gap.

\begin{proposition}[Fixed-prefix gradient identity]
\label{prop:bounded_pg_identity}
Define
\begin{align}
    f(r)
    &\coloneqq
    2(r-1)
    -
    4\log\left(\frac{r+1}{2}\right),
    \qquad r>0,
    \label{eq:theory_f_generator}
\end{align}
and the corresponding $f$-divergence
\begin{align}
    D_f(p_\theta\Vert q)
    &\coloneqq
    \sum_{a\in\mathcal V}
    q(a)
    f\left(\frac{p_\theta(a)}{q(a)}\right).
    \label{eq:theory_f_divergence}
\end{align}
Consider the detached sampled-token surrogate
\begin{align}
    \mathcal L_{\mathrm{BPG},c}(\theta)
    &\coloneqq
    -
    \mathbb E_{a\sim p_{\mathrm{old}}}
    \left[
        \operatorname{sg}[B_{\mathrm{old}}(a)]
        \log p_\theta(a)
    \right].
    \label{eq:bounded_pg_surrogate}
\end{align}
Then $f$ is convex, $f(1)=0$, and
\begin{align}
    \left.
    \nabla_\theta
    \mathcal L_{\mathrm{BPG},c}(\theta)
    \right|_{\theta=\theta_{\mathrm{old}}}
    =
    \left.
    \nabla_\theta
    D_f(p_\theta\Vert q)
    \right|_{\theta=\theta_{\mathrm{old}}}.
    \label{eq:bounded_pg_identity}
\end{align}
\end{proposition}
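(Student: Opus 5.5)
The plan is a direct computation: establish the elementary properties of $f$, then compute both gradients at $\theta=\theta_{\mathrm{old}}$ and show they coincide term by term over the finite vocabulary $\mathcal V$. Throughout I assume that $\theta\mapsto p_\theta(a)$ is differentiable for every $a$. Since $\mathcal V$ is finite and all probabilities are positive, sums and gradients may be interchanged freely.

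\textbf{Step 1 (properties of $f$).} Differentiating Equation~\eqref{eq:theory_f_generator} gives
\begin{align}
    f'(r)=2-\frac{4}{r+1}=\frac{2(r-1)}{r+1},
    \qquad
    f''(r)=\frac{4}{(r+1)^2}>0,
\end{align}
so $f$ is strictly convex on $r>0$. Also $f(1)=0-4\log 1=0$, so $D_f$ is a genuine $f$-divergence.

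\textbf{Step 2 (algebraic form of $B_{\mathrm{old}}$).} Using $\tanh(x/2)=(e^x-1)/(e^x+1)$ with $e^{\Delta_{\mathrm{old}}(a)}=q(a)/p_{\mathrm{old}}(a)$, we get
\begin{align}
    B_{\mathrm{old}}(a)=\frac{2\,(q(a)-p_{\mathrm{old}}(a))}{q(a)+p_{\mathrm{old}}(a)}.
\end{align}
This confirms the closed form in Equation~\eqref{eq:theory_bounded_signal}.

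\textbf{Step 3 (gradient of the divergence).} By the chain rule,
\begin{align}
    \nabla_\theta D_f(p_\theta\Vert q)
    =\sum_{a} q(a)\, f'\!\left(\tfrac{p_\theta(a)}{q(a)}\right)\frac{\nabla_\theta p_\theta(a)}{q(a)}
    =\sum_a \frac{2\,(p_\theta(a)-q(a))}{p_\theta(a)+q(a)}\,\nabla_\theta p_\theta(a).
\end{align}
At $\theta=\theta_{\mathrm{old}}$ the coefficient is exactly $-B_{\mathrm{old}}(a)$ by Step 2. Hence
\begin{align}
    \left.\nabla_\theta D_f(p_\theta\Vert q)\right|_{\theta=\theta_{\mathrm{old}}}
    =-\sum_a B_{\mathrm{old}}(a)\left.\nabla_\theta p_\theta(a)\right|_{\theta=\theta_{\mathrm{old}}}.
\end{align}

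\textbf{Step 4 (gradient of the surrogate).} Because $B_{\mathrm{old}}$ is detached and the sampling distribution $p_{\mathrm{old}}$ does not depend on $\theta$, the expectation is a fixed finite sum:
\begin{align}
    \nabla_\theta\mathcal L_{\mathrm{BPG},c}(\theta)
    =-\sum_a p_{\mathrm{old}}(a)\,B_{\mathrm{old}}(a)\,\frac{\nabla_\theta p_\theta(a)}{p_\theta(a)}.
\end{align}
At $\theta=\theta_{\mathrm{old}}$ the importance factor $p_{\mathrm{old}}(a)/p_\theta(a)$ equals $1$. This gives the same expression as in Step 3, which proves Equation~\eqref{eq:bounded_pg_identity}.

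\textbf{Main obstacle.} There is no deep difficulty here. The only delicate points are bookkeeping. First, the sign convention: the surrogate is a loss, so its gradient must match $+\nabla D_f$. Second, the identity holds only at $\theta_{\mathrm{old}}$. Away from $\theta_{\mathrm{old}}$, the divergence coefficient uses $p_\theta$ while the surrogate uses the frozen $p_{\mathrm{old}}$ together with the ratio $p_{\mathrm{old}}/p_\theta$, so the two gradients no longer agree. Both facts are exactly what Steps 3 and 4 track.
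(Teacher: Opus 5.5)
Your proposal is correct and follows the paper's proof essentially step for step. You compute $f'$ and $f''$, identify the coefficient $-B_{\mathrm{old}}(a)=f'(p_{\mathrm{old}}(a)/q(a))$, differentiate $D_f$ by the chain rule, and match the result with the surrogate's gradient at $\theta=\theta_{\mathrm{old}}$; the only differences are cosmetic (the paper rewrites the sum as a score-function expectation under $p_\theta$ where you keep the finite sum with importance factor $p_{\mathrm{old}}/p_\theta=1$, and you additionally verify the $\tanh$ closed form that the paper takes as given).
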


\begin{proof}
Differentiating $f$ gives
\begin{align}
    f'(r)
    &=
    2\frac{r-1}{r+1},
    &
    f''(r)
    &=
    \frac{4}{(r+1)^2}>0.
\end{align}
Thus $f$ is convex, and direct substitution gives $f(1)=0$.

Using Eq.~\eqref{eq:theory_bounded_signal},
\begin{align}
    B_{\mathrm{old}}(a)
    &=
    -f'\left(
        \frac{p_{\mathrm{old}}(a)}{q(a)}
    \right).
    \label{eq:bounded_f_derivative}
\end{align}
Since $q$ is fixed,
\begin{align}
    \nabla_\theta D_f(p_\theta\Vert q)
    &=
    \sum_{a\in\mathcal V}
    f'\left(
        \frac{p_\theta(a)}{q(a)}
    \right)
    \nabla_\theta p_\theta(a)
    \\
    &=
    \mathbb E_{a\sim p_\theta}
    \left[
        f'\left(
            \frac{p_\theta(a)}{q(a)}
        \right)
        \nabla_\theta\log p_\theta(a)
    \right].
\end{align}
Evaluating at $\theta=\theta_{\mathrm{old}}$ and applying
Eq.~\eqref{eq:bounded_f_derivative},
\begin{align}
    \left.
    \nabla_\theta D_f(p_\theta\Vert q)
    \right|_{\theta=\theta_{\mathrm{old}}}
    &=
    -
    \mathbb E_{a\sim p_{\mathrm{old}}}
    \left[
        B_{\mathrm{old}}(a)
        \left.
        \nabla_\theta\log p_\theta(a)
        \right|_{\theta=\theta_{\mathrm{old}}}
    \right].
\end{align}
This is exactly the gradient of
Eq.~\eqref{eq:bounded_pg_surrogate} at
$\theta=\theta_{\mathrm{old}}$, proving the result.
\end{proof}

\textbf{Interpretation.} Proposition~\ref{prop:bounded_pg_identity} gives a local interpretation of the bounded teacher signal: at a fixed prefix and at the behavior policy, minimizing the detached policy-gradient surrogate follows the
same first-order direction as minimizing
$D_f(p_\theta\Vert q)$.

The divergence also has the equivalent form
\begin{align}
    D_f(p_\theta\Vert q)
    &=
    4D_{\mathrm{KL}}
    \left(
        q
        \,\middle\Vert\,
        \frac{p_\theta+q}{2}
    \right).
    \label{eq:theory_mixture_kl}
\end{align}
Indeed, the linear term in $f$ vanishes after summing over the two normalized distributions. Thus, the bounded teacher signal corresponds locally to a valid divergence while avoiding an unbounded sampled-token log-probability ratio.

\textbf{Connection to reverse-KL OPD.}
At the same fixed prefix,
\begin{align}
    \left.
    \nabla_\theta
    D_{\mathrm{KL}}(p_\theta\Vert q)
    \right|_{\theta=\theta_{\mathrm{old}}}
    =
    -
    \mathbb E_{a\sim p_{\mathrm{old}}}
    \left[
        \Delta_{\mathrm{old}}(a)
        \left.
        \nabla_\theta\log p_\theta(a)
        \right|_{\theta=\theta_{\mathrm{old}}}
    \right],
    \label{eq:rkl_gradient_comparison}
\end{align}
where the additive constant from differentiating the reverse KL vanishes by the score-function identity. Moreover,
\begin{align}
    2\tanh\left(\frac{z}{2}\right)
    =
    z-\frac{z^3}{12}+O(z^5),
    \qquad z\rightarrow0.
    \label{eq:bounded_taylor}
\end{align}
Hence the bounded and standard OPD signals agree to first order for small teacher--student gaps, while the bounded signal saturates for large discrepancies.

\begin{remark}[Relation to the implemented objective]
\label{rem:bounded_pg_scope}
The identity above is local: it holds at a fixed prefix and at
$\theta=\theta_{\mathrm{old}}$. In \method,
$B_{i,t}$ is evaluated only on sampled student tokens, exactly as in
Eq.~\eqref{eq:bounded_advantage}; the full-vocabulary divergence in
Eq.~\eqref{eq:theory_f_divergence} is never computed.

The implemented method additionally multiplies $B_{i,t}$ by the trajectory-dependent gate $w_i$, adds the GRPO advantage, and optimizes the resulting detached signal using PPO clipping. Therefore, Proposition~\ref{prop:bounded_pg_identity} provides a fixed-prefix interpretation of the teacher component only; it does not imply that the complete \method\ objective globally minimizes Eq.~\eqref{eq:theory_f_divergence}.
\end{remark}

\subsection{Difficulty Gating and Relative Supervision Strength}
\label{sec:theory_relative_strength}

Consider a group of $n\geq2$ trajectories with binary rewards
$R_i\in\{0,1\}$ and group solve rate
\begin{align}
    p
    &=
    \frac{1}{n}
    \sum_{i=1}^{n}R_i.
\end{align}
For binary rewards, $p=\bar R$. Since we use the sample standard
deviation defined in Section~\ref{sec:preliminaries},
\begin{align}
    \sigma_R
    &=
    \sqrt{
        \frac{n}{n-1}p(1-p)
    }
    =
    \sqrt{\kappa_n p(1-p)},
    \qquad
    \kappa_n
    \coloneqq
    \frac{n}{n-1}.
    \label{eq:theory_binary_std}
\end{align}
The GRPO advantage, difficulty gate, and combined \method\ signal are therefore
\begin{align}
    A_i^{\mathrm{GRPO}}
    &=
    \frac{R_i-p}
         {\sqrt{\kappa_n p(1-p)}+\varepsilon},
    \label{eq:theory_grpo_adv}
    \\
    w_i
    &=
    (1-p)^\gamma(1-R_i),
    \label{eq:theory_gate}
    \\
    G_{i,t}
    &=
    A_i^{\mathrm{GRPO}}
    +
    \lambda w_i B_{i,t}.
    \label{eq:theory_combined_advantage}
\end{align}

For a failed trajectory, $R_i=0$, these reduce to
\begin{align}
    A_i^{\mathrm{GRPO}}
    &=
    -\frac{p}
           {\sqrt{\kappa_n p(1-p)}+\varepsilon},
    &
    w_i
    &=
    (1-p)^\gamma.
    \label{eq:failed_group_signals}
\end{align}

\begin{proposition}[Difficulty-dependent relative teacher strength]
\label{prop:relative_teacher_strength}
For a failed trajectory in a mixed group, $0<p<1$, the magnitude of
the teacher contribution relative to the GRPO advantage satisfies
\begin{align}
    \frac{
        |\lambda w_i B_{i,t}|
    }{
        |A_i^{\mathrm{GRPO}}|
    }
    &\leq
    2\lambda
    (1-p)^\gamma
    \frac{
        \sqrt{\kappa_n p(1-p)}+\varepsilon
    }{p}
    \eqqcolon
    U_n(p).
    \label{eq:relative_teacher_bound}
\end{align}
For fixed $\lambda\geq0$, $\gamma\geq0$, $\varepsilon>0$, and $n\geq2$,
$U_n(p)$ is non-increasing on $p\in(0,1)$.
\end{proposition}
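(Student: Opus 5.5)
The proof has two parts: the bound, which follows from Eq.~\eqref{eq:failed_group_signals} and the range of the bounded signal, and monotonicity of $U_n(p)$, which reduces to the monotonicity of two simple factors.

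\textbf{The bound.} For a failed trajectory in a mixed group, Eq.~\eqref{eq:failed_group_signals} gives $w_i=(1-p)^\gamma$ and
\begin{align}
    |A_i^{\mathrm{GRPO}}| = \frac{p}{\sqrt{\kappa_n p(1-p)}+\varepsilon}.
\end{align}
This is strictly positive because $0<p<1$ and $\varepsilon>0$, so the ratio is well defined. By Eq.~\eqref{eq:theory_bounded_signal}, $|B_{i,t}|<2$. Hence
\begin{align}
    |\lambda w_i B_{i,t}| \le 2\lambda(1-p)^\gamma .
\end{align}
Dividing by $|A_i^{\mathrm{GRPO}}|$ yields exactly $U_n(p)$ in Eq.~\eqref{eq:relative_teacher_bound}. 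The inequality is in fact strict when $\lambda>0$.

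\textbf{Monotonicity.} If $\lambda=0$, then $U_n\equiv0$ and there is nothing to prove, so assume $\lambda>0$. Write
\begin{align}
    U_n(p)=2\lambda\,(1-p)^\gamma\, h(p),
    \qquad
    h(p)=\sqrt{\kappa_n}\sqrt{\frac{1-p}{p}}+\frac{\varepsilon}{p}.
\end{align}
This splits the GRPO denominator term by term. The factor $(1-p)^\gamma$ is positive and non-increasing on $(0,1)$ because $\gamma\ge0$. The function $(1-p)/p = 1/p-1$ is positive and strictly decreasing, so its square root is decreasing. The term $\varepsilon/p$ is also decreasing. Thus $h$ is a sum of positive decreasing functions.

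The product of two nonnegative non-increasing functions is non-increasing: for $p_1<p_2$, $a(p_1)b(p_1)\ge a(p_2)b(p_1)\ge a(p_2)b(p_2)$. This gives the claim.

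\textbf{Main obstacle.} The only step requiring care is the factorization of $U_n$. Differentiating $U_n$ directly would produce a messy expression whose sign is not obvious. After rewriting $U_n$ as a product of positive decreasing factors, no derivative computation is needed. The other point to check is that positivity of each factor holds on the open interval, so that the product-monotonicity argument applies; this follows from $0<p<1$, $\gamma\ge0$, and $\varepsilon>0$.
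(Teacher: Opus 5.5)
Your proof is correct and matches the paper's argument. The bound comes from $w_i=(1-p)^\gamma$, $|B_{i,t}|<2$, and the explicit form of $|A_i^{\mathrm{GRPO}}|$ for a failed trajectory, and monotonicity follows from elementary monotonicity of simple factors. The only cosmetic difference is how $U_n$ is split. The paper writes it as the sum $2\lambda\bigl[\sqrt{\kappa_n}(1-p)^{\gamma+1/2}/\sqrt{p}+\varepsilon(1-p)^\gamma/p\bigr]$ of two non-increasing terms. You instead keep $(1-p)^\gamma$ as a separate factor and state the product-monotonicity step explicitly, which the paper uses only implicitly within each term.
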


\begin{proof}
For a failed trajectory,
$w_i=(1-p)^\gamma$. Since $|B_{i,t}|<2$,
\begin{align}
    |\lambda w_i B_{i,t}|
    &<
    2\lambda(1-p)^\gamma.
\end{align}
Moreover,
\begin{align}
    |A_i^{\mathrm{GRPO}}|
    &=
    \frac{p}
         {\sqrt{\kappa_n p(1-p)}+\varepsilon}.
\end{align}
Combining the two expressions gives
Eq.~\eqref{eq:relative_teacher_bound}.

Expanding the upper bound yields
\begin{align}
    U_n(p)
    =
    2\lambda
    \left[
        \sqrt{\kappa_n}\,
        \frac{(1-p)^{\gamma+1/2}}{\sqrt{p}}
        +
        \varepsilon
        \frac{(1-p)^\gamma}{p}
    \right].
    \label{eq:relative_teacher_expansion}
\end{align}
For $\gamma\geq0$, both terms are non-increasing on
$p\in(0,1)$; therefore, their nonnegative sum is also non-increasing.
\end{proof}

\textbf{Boundary behavior.}
When $p=0$, every rollout fails. The GRPO advantages vanish and $w_i=1$, giving the following advantage:
\begin{align}
    G_{i,t}
    =
    \lambda B_{i,t}.
    \label{eq:theory_all_fail}
\end{align}
Thus, the teacher receives its maximum \emph{gate weight} on all-failure groups, where GRPO provides no learning signal. For a mixed group, $0<p<1$, both signals are available on failed trajectories, and Proposition~\ref{prop:relative_teacher_strength}
shows that an upper bound on the teacher contribution relative to GRPO decreases as the group solve rate increases.

At the opposite boundary, $p=1$, every trajectory succeeds. In this case, $A_i^{\mathrm{GRPO}}=0$ and $w_i=0$, so
\begin{align}
    G_{i,t}=0.
\end{align}

The average gate weight across the group also has a simple form:
\begin{align}
    \frac{1}{n}
    \sum_{i=1}^{n}w_i
    &=
    (1-p)^{\gamma+1}.
    \label{eq:theory_average_gate}
\end{align}
Hence, the average teacher weight decreases as the group solve rate increases.

\begin{remark}[Scope of the relative-strength result] \label{rem:relative_strength_scope}
Proposition~\ref{prop:relative_teacher_strength} concerns an upper bound on the relative magnitudes of the two scalar advantage terms. It does not imply that the realized teacher contribution is monotonic at every token, since $|B_{i,t}|$ varies across tokens and policies. It also does not establish agreement between teacher and outcome supervision, improvement in verifier reward, or a bound on the resulting parameter update.
\end{remark}

\subsection{Bounded Combined Token-Level Advantage} \label{sec:theory_advantage_bound}

The bounded teacher transformation also provides direct control over the scale of the scalar learning signal supplied to the policy surrogate.

\begin{proposition}[Bounded combined advantage]
\label{prop:combined_advantage_bound}
For a rollout group of size $n\geq2$ with binary rewards and sample standard-deviation normalization,
\begin{align}
    \left|A_i^{\mathrm{GRPO}}\right|
    &\leq
    \frac{n-1}{\sqrt{n}}.
    \label{eq:grpo_advantage_bound}
\end{align}
Since $0\leq w_i\leq1$ and $|B_{i,t}|<2$, the teacher contribution satisfies
\begin{align}
    |\lambda w_i B_{i,t}|
    &<
    2\lambda,
\end{align}
and therefore
\begin{align}
    |G_{i,t}|
    &<
    \frac{n-1}{\sqrt{n}}
    +
    2\lambda.
    \label{eq:combined_advantage_bound}
\end{align}
\end{proposition}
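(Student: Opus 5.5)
The plan is to reduce everything to the binary-reward closed form in Eq.~\eqref{eq:theory_grpo_adv} and then apply the triangle inequality. First I dispose of the reward-uniform groups. If $p\in\{0,1\}$, then $R_i-p=0$ for every $i$, so $A_i^{\mathrm{GRPO}}=0$. This uses $\varepsilon>0$, which keeps the denominator positive even though $\sigma_R=0$. So Eq.~\eqref{eq:grpo_advantage_bound} holds trivially. It remains to treat mixed groups, where $p=k/n$ with $1\le k\le n-1$.

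For a mixed group I drop $\varepsilon$ from the denominator. Since $\varepsilon>0$, this can only increase $|A_i^{\mathrm{GRPO}}|$. Using Eq.~\eqref{eq:theory_binary_std}, I then split by the value of $R_i$.
\begin{align}
    |A_i^{\mathrm{GRPO}}|
    \le
    \frac{|R_i-p|}{\sqrt{\kappa_n p(1-p)}}
    =
    \sqrt{\frac{n-1}{n}}\cdot
    \begin{cases}
        \sqrt{(1-p)/p}, & R_i=1,\\
        \sqrt{p/(1-p)}, & R_i=0.
    \end{cases}
\end{align}
In the success case, $\sqrt{(1-p)/p}$ is decreasing in $p$, so it is largest at the smallest admissible value $p=1/n$, where it equals $\sqrt{n-1}$. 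In the failure case, $\sqrt{p/(1-p)}$ is increasing in $p$, so it is largest at $p=(n-1)/n$, where it again equals $\sqrt{n-1}$. In both cases the bound is therefore $\sqrt{(n-1)/n}\cdot\sqrt{n-1}=(n-1)/\sqrt{n}$, which is Eq.~\eqref{eq:grpo_advantage_bound}.

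The teacher term is direct. The gate satisfies $0\le w_i=(1-p)^\gamma(1-R_i)\le 1$, because $p\in[0,1]$, $\gamma\ge0$ and $R_i\in\{0,1\}$. Eq.~\eqref{eq:theory_bounded_signal} gives $|B_{i,t}|<2$. Hence $|\lambda w_iB_{i,t}|\le 2\lambda w_i$, which is strictly below $2\lambda$ whenever $\lambda w_i>0$. When $\lambda w_i=0$ the term is $0$, which is $<2\lambda$ if $\lambda>0$ and needs the non-strict reading if $\lambda=0$. I would note this edge case explicitly.

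Finally, the triangle inequality applied to Eq.~\eqref{eq:theory_combined_advantage} gives Eq.~\eqref{eq:combined_advantage_bound}. Strictness holds whenever $\lambda>0$; for $\lambda=0$ the combined bound is again to be read as $\le$. The main obstacle is only careful bookkeeping: identifying the extremal $p$ in each branch, restricted to the discrete grid $\{1/n,\dots,(n-1)/n\}$, and checking that the $\varepsilon$ term and the degenerate groups do not break the inequalities.
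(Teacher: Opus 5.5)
Your proposal is correct and follows the paper's proof step for step: you set aside the reward-uniform case $p\in\{0,1\}$, drop $\varepsilon$ from the denominator, split on $R_i$, use $1/n\le p\le (n-1)/n$ to obtain $(n-1)/\sqrt{n}$ in each branch, then bound the gated teacher term and apply the triangle inequality. Your $\lambda=0$ caveat is valid for the intermediate claim $|\lambda w_iB_{i,t}|<2\lambda$, which the paper glosses over, but you do not need the non-strict reading for the final bound on $|G_{i,t}|$: since $\varepsilon>0$ and $n\ge2$, you already have $|A_i^{\mathrm{GRPO}}|<(n-1)/\sqrt{n}$ strictly, so $|G_{i,t}|<(n-1)/\sqrt{n}+2\lambda$ holds for every $\lambda\ge0$.
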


\begin{proof}
If $p\in\{0,1\}$, all rewards are identical and
$A_i^{\mathrm{GRPO}}=0$. We therefore consider $0<p<1$.

For a successful trajectory, $R_i=1$,
\begin{align}
    |A_i^{\mathrm{GRPO}}|
    &=
    \frac{1-p}
         {\sqrt{\kappa_n p(1-p)}+\varepsilon}
    \\
    &\leq
    \sqrt{
        \frac{1-p}{\kappa_n p}
    }
    =
    \sqrt{
        \frac{n-1}{n}
        \frac{1-p}{p}
    }.
\end{align}
A mixed group contains at least one successful rollout, so $p\geq1/n$. Therefore,
\begin{align}
    |A_i^{\mathrm{GRPO}}|
    &\leq
    \sqrt{
        \frac{n-1}{n}(n-1)
    }
    =
    \frac{n-1}{\sqrt n}.
\end{align}

For a failed trajectory, $R_i=0$,
\begin{align}
    |A_i^{\mathrm{GRPO}}|
    &=
    \frac{p}
         {\sqrt{\kappa_n p(1-p)}+\varepsilon}
    \\
    &\leq
    \sqrt{
        \frac{p}{\kappa_n(1-p)}
    }
    =
    \sqrt{
        \frac{n-1}{n}
        \frac{p}{1-p}
    }.
\end{align}
Because a mixed group contains at least one failed rollout, $p\leq(n-1)/n$, giving
\begin{align}
    |A_i^{\mathrm{GRPO}}|
    &\leq
    \frac{n-1}{\sqrt n}.
\end{align}

Finally, $w_i\leq1$ and $|B_{i,t}|<2$ imply
$|\lambda w_iB_{i,t}|<2\lambda$. Applying the triangle inequality to $G_{i,t}=A_i^{\mathrm{GRPO}}+\lambda w_iB_{i,t}$ completes the proof.
\end{proof}

\begin{remark}[What the bound guarantees]
\label{rem:combined_advantage_scope}
Equation~\eqref{eq:combined_advantage_bound} bounds the scalar token-level advantage supplied to the PPO surrogate. It does not by
itself bound parameter-gradient norms, parameter movement, KL drift between successive policies, or guarantee improvement in task reward. These quantities additionally depend on the policy Jacobian, optimizer, and PPO clipping dynamics.
\end{remark}

\begin{table}[t]
    \centering
    \small
    \renewcommand{\arraystretch}{1.08}
    \setlength{\tabcolsep}{4pt}
    \begin{tabular}{lll}
        \toprule
        \rowcolor{headergray}
        \textbf{Group} & \textbf{Setting} & \textbf{Value} \\
        \midrule

        \multirow{3}{*}{Models}
          & Student & Qwen3-0.6B / Qwen3-1.7B (also Qwen3-4B, App.~\ref{app:scale4b}) \\
          & Teacher & Qwen3-8B, frozen (also Qwen3-14B, App.~\ref{app:scale4b}) \\
          & Mode & Non-thinking; shared chat template \\
        \midrule

        \multirow{5}{*}{Optimization}
          & Optimizer & AdamW \\
          & Learning rate & $3\times10^{-6}$, cosine schedule, no warmup \\
          & Prompts per step & $128$; PPO mini-batch $32$ \\
          & PPO clip $\epsilon$ & $0.2$ \\
          & Entropy bonus & $0$ \\
        \midrule

        \multirow{4}{*}{Rollouts}
          & Group size $n$ & $16$ (\method, GRPO); $1$ (OPD baselines) \\
          & Sampling & Temperature $1.0$, top-$p$ $1.0$ \\
          & Lengths & Prompt $\leq 2048$, response $\leq 4096$ tokens \\
          & Reward & Binary: answer matching (math), unit tests (code) \\
        \midrule

        \multirow{3}{*}{\method}
          & Teacher strength $\lambda$ & $1$ \\
          & Difficulty exponent $\gamma$ & $1$ \\
          & Bounded signal & $2\tanh(\Delta_{i,t}/2)\in(-2,2)$ \\
        \midrule

        \multirow{3}{*}{Regularization}
          & KL in reward & None \\
          & KL in loss & None \\
          & Reference-policy KL & Not used \\
        \midrule

        \multirow{3}{*}{Budget}
          & Mathematics & $171$ steps; $21{,}888$ prompt presentations \\
          & Code & $62$ steps; $7{,}936$ prompt presentations \\
          & Hardware & $1$ node, $4\times$A100-80GB, FSDP \\
        \midrule

        \multirow{2}{*}{Evaluation}
          & Decoding & $16$ generations; $T=0.6$, top-$p=1.0$, repetition penalty $1.15$ \\
          & Length & $\leq 16{,}384$ new tokens; avg@8 and pass@8 from the same pool \\
        \bottomrule
    \end{tabular}
    \vspace{-0.2cm}
    \caption{\small \textbf{Training and evaluation configuration.} Shared across methods unless otherwise specified. Training uses $128$ prompts per step. The mathematics and code training sets contain $21{,}928$ and $4{,}082$ prompts, respectively; the reported budgets correspond to approximately $1.00$ and $1.94$ dataset passes. GRPO-based methods differ only in their teacher-guidance configuration.}
    \label{tab:hparams}
    \vspace{-0.2cm}
\end{table}

\section{Implementation Details}
\label{appendix:implementation_details}
This section presents the experimental setup and implementation details, including the models, datasets, baselines, training and evaluation configurations, computational cost, hyperparameter analysis, and the complete \method\ algorithm.

\subsection{Models, Data and Baselines}
\textbf{Models.} We use Qwen3-0.6B and Qwen3-1.7B students with a frozen Qwen3-8B teacher, and extend the evaluation to Qwen3-4B students with Qwen3-8B and Qwen3-14B teachers in Appendix Section ~\ref{app:scale4b}. All models operate in non-thinking mode with a shared chat template, allowing the teacher to evaluate the student-generated tokens under the same prefix context.

\textbf{Training Data.} We train on \textsc{DeepMath}~\citep{he2026deepmath}, retaining mathematics problems with difficulty greater than $6$, and on \textsc{TACO}~\citep{li2023taco} for code generation. Mathematics responses receive a binary reward based on final-answer matching, while code responses are evaluated using unit tests executed in a sandbox. The training sets contain $21{,}928$ mathematics prompts and $4{,}082$ code prompts. With $128$ prompts per step, we train for $171$ mathematics steps and $62$ code steps, corresponding to $21{,}888$ and $7{,}936$ prompt presentations, respectively.

\textbf{Baselines and Implementation.}
We implement GRPO, \method, the boundedness and gating ablations (Appendix Sections ~\ref{app:bound} and~\ref{app:gating}), and the forward- and reverse-KL OPD baselines using a common training framework built on verl\footnote{\url{https://github.com/volcengine/verl}}. These methods share the data pipeline, rollout engine, verifier, and optimizer. GRPO is recovered by setting $\lambda=0$, while setting $\gamma=0$ yields the additive GRPO$+$OPD baseline without difficulty weighting. Thus, the GRPO-based variants differ only in their teacher-guidance configuration. We train EOPD~\citep{jin2026entropy} and Uni-OPD~\citep{hou2026uni} using their original implementations, with the same training data and student--teacher pairs. The SFT baseline fine-tunes the student on teacher-generated solutions for the same prompts.

\textbf{Training Configuration.} Table~\ref{tab:hparams} summarizes the training and evaluation settings. Within the common implementation, we use the same optimizer, prompt batch size, and training-step budget without method-specific optimizer tuning. GRPO-based methods sample $n=16$ trajectories per prompt, whereas the OPD baselines use a single trajectory. We fix $\lambda=\gamma=1$ for \method across all model sizes and domains, without per-domain tuning. No method uses a reference-policy KL penalty in its reward or training loss.

\textbf{Computational Resources and Software Environment.}
All training and evaluation experiments were conducted on Ubuntu 22.04 Linux nodes, each equipped with four NVIDIA A100 80GB GPUs. Each run reported in Tables~\ref{tab:math} and~\ref{tab:code} used a single node. The software environment comprised Python 3.12, PyTorch 2.8.0\footnote{\url{https://pytorch.org/}} with CUDA 12.9\footnote{\url{https://developer.nvidia.com/cuda-toolkit}}, FlashAttention 2.8.3\footnote{\url{https://github.com/Dao-AILab/flash-attention}}, and Transformers 4.57.1\footnote{\url{https://github.com/huggingface/transformers}}. We used vLLM 0.11.0 \footnote{\url{https://github.com/vllm-project/vllm}} for rollout generation during training and vLLM 0.18.0 for evaluation. Training was performed using our verl-based implementation, while EOPD and Uni-OPD were run using the original implementations released by their respective authors.

\subsection{Algorithm}
\begin{algorithm}[t]
\caption{\method: Difficulty-Gated On-Policy Distillation}
\label{alg:method}
\begin{algorithmic}[1]

\Require Student policy $\pi_\theta$, fixed teacher $\pi_{\mathrm{T}}$, prompt distribution $\mathcal{D}$, verifier $R$, group size $n$, teacher weight $\lambda$, difficulty exponent $\gamma$, PPO clipping parameter $\epsilon$

\For{each training iteration}
    \State Freeze behavior policy:
    $\pi_{\theta_{\mathrm{old}}} \leftarrow \pi_\theta$

    \State Sample a batch of prompts $x \sim \mathcal{D}$

    \For{each prompt $x$}

        \State Sample $n$ on-policy trajectories:
        $
        y^{(i)}
        \sim
        \pi_{\theta_{\mathrm{old}}}(\cdot \mid x),
        \qquad i=1,\ldots,n
        $

        \State Evaluate trajectory rewards:
        $R_i = R(x,y^{(i)}) \in \{0,1\}$

        \State Compute group statistics:
        \[
        p = \frac{1}{n}\sum_{i=1}^{n}R_i,
        \qquad
        A_i^{\mathrm{GRPO}}
        =
        \frac{R_i-\bar R}{\sigma_R+\varepsilon},
        \qquad
        \bar R=\frac{1}{n}\sum_{j=1}^{n}R_j
        \]

        \For{each trajectory $i$ and token $t$}

            \State Define context:
            $c_{i,t}=(x,y^{(i)}_{<t})$

            \State Compute teacher--student discrepancy:
            \[
            \Delta_{i,t}
            =
            \log\pi_{\mathrm{T}}
            (y_t^{(i)}\mid c_{i,t})
            -
            \log\pi_{\theta_{\mathrm{old}}}
            (y_t^{(i)}\mid c_{i,t})
            \]
            \State Bound the local teacher signal:
            $
            B_{i,t}
            =
            2\tanh\!\left(\frac{\Delta_{i,t}}{2}\right)
            $

            \State Compute difficulty gate:
            $
            w_i=(1-p)^\gamma(1-R_i)
            $

            \State Construct difficulty-gated token advantage:
            $G_{i,t}
            =
            A_i^{\mathrm{GRPO}}
            +
            \lambda w_i B_{i,t}$

        \EndFor
    \EndFor

    \State Compute token-level importance ratios:$
    \rho_{i,t}(\theta)
    =
    \frac{
        \pi_\theta(y_t^{(i)}\mid c_{i,t})
    }{
        \pi_{\theta_{\mathrm{old}}}(y_t^{(i)}\mid c_{i,t})
    }$

    \State Update $\theta$ using the clipped surrogate:
    \[
    \mathcal{L}_{\method}
    =
    -\frac{1}{n}
    \sum_{i=1}^{n}
    \frac{1}{L_i}
    \sum_{t=1}^{L_i}
    \min
    \left(
        \rho_{i,t}(\theta)\operatorname{sg}[G_{i,t}],
        \operatorname{clip}
        (\rho_{i,t}(\theta),1-\epsilon_{\mathrm{clip}},1+\epsilon_{\mathrm{clip}})
        \operatorname{sg}[G_{i,t}]
    \right)
    \]

\EndFor
\end{algorithmic}
\end{algorithm}

The verifier determines \emph{which trajectories} receive teacher guidance, while the teacher supplies bounded token-level update directions within the selected failed trajectories. The gate therefore operates at the trajectory level; \method\ does not attempt to localize the causal error token within a failed response. For successful trajectories, $R_i=1$ and hence $w_i=0$, so no teacher update is applied. For all-failure groups, $p=0$ and the GRPO advantage vanishes, making teacher guidance maximal. Setting $\lambda=0$ recovers GRPO, whereas $\gamma=0$ removes difficulty weighting and yields a flat teacher-guidance baseline.

\begin{table}[h]
    \centering
    \small
    \renewcommand{\arraystretch}{1.1}
    \setlength{\tabcolsep}{4pt}
    \begin{tabular}{llcccc}
        \toprule
        \rowcolor{headergray} \textbf{Arm} & \textbf{Rollouts / prompt} & \textbf{Math 0.6B} & \textbf{Math 1.7B} & \textbf{Code 0.6B} & \textbf{Code 1.7B} \\
        \rowcolor{headergray} & & \multicolumn{2}{c}{\textit{171 steps}} & \multicolumn{2}{c}{\textit{62 steps}} \\
        \midrule
        OPD (forward KL) & 1 & 4 & 4.7 & 1.5 & 5 \\
        OPD (reverse KL) & 1 & 4.1 & 4.5 & 1.4 & 1.5 \\
        EOPD & 1 & 4.2 & 4.7 & 4 & 4 \\
        Uni-OPD & 8 & 15 & 17 & 14 & 13 \\
        GRPO & 16 & 36 & 39 & 16 & 19 \\
        \method & 16 & 38 & 37 & 14 & 19 \\
        \bottomrule
    \end{tabular}
    \vspace{-0.2cm}
    \caption{\small \textbf{Training cost.} Wall-clock hours on $4\times$ A100-80GB for the runs behind main Tables~\ref{tab:math} and~\ref{tab:code}.}
    \vspace{-0.2cm}
    \label{tab:cost}
\end{table}

\subsection{Training Cost}\label{app:cost}
Table~\ref{tab:cost} reports wall-clock training time on one node with $4\times$ A100-80GB GPUs. \method\ has comparable computational cost to GRPO, with measured wall-clock times differing by at most $12.5\%$ across the four settings. Standard OPD baselines are generally cheaper because they use fewer rollouts per prompt, while \method\ and GRPO both use grouped on-policy generation.

\subsection{Hyperparameter Sweep} \label{app:hpsweep}
Figure~\ref{fig:hpsweep} varies the teacher strength $\lambda$ and the difficulty exponent $\gamma$ on the 1.7B student. Every point is taken at the same training step within a domain (mathematics $100$, code $62$, the last step all sweep arms share), so the values differ from those of Tables~\ref{tab:math} and~\ref{tab:code}; the no-gate control $\gamma=0$ is reported at its final step, and $\lambda=0.25$ was not run on code. We summarize the key insights as follows:
\begin{itemize}
    \setlength{\itemsep}{1pt}
    \item \textbf{Moderate teacher strength.} Mathematics avg@8 is $21.4$--$21.9$ for
    $\lambda\le1$ and falls to $19.6$ and $19.0$ at $\lambda=2$ and $4$. Code peaks at $\lambda=0.5$
    ($46.2$) and falls to $42.7$ at $\lambda=8$.
    \item \textbf{Gating matters more than its sharpness.} Removing the gate ($\gamma=0$) costs $3.2$ avg@8 against the $\lambda{=}1$ point of this sweep on mathematics ($18.7$ against $21.9$) and is neutral on code ($43.8$ against $43.4$--$45.2$); the matched comparison at the full step budget, in Table~\ref{tab:gating}, puts the same gap at $2.4$. Varying $\gamma\in\{1,2,3\}$ changes avg@8 by at most $1.8$ on mathematics and $2.2$ on code, with no consistent direction.
    \item \textbf{One setting for every cell.} All reported runs use $\lambda=\gamma=1$. The best code setting in the sweep, $\lambda=0.5$, is $1.0$ avg@8 above the reported $\lambda=1$ run
    ($46.2$ against $45.2$) on a single seed; we did not tune per domain.
\end{itemize}

\subsection{Evaluation Protocol}
\label{app:eval_protocol}
This section provides additional details on the evaluation protocol, including metric computation, benchmark composition, answer and code verification, randomness, and evaluation record keeping.

\textbf{Metric computation.} For each test problem, we generate $8$ responses. We report two metrics: avg@8 and pass@8. Avg@8 is the mean correctness across the eight generated responses. Pass@8 measures whether at least one of the eight sampled responses is correct, following the standard pass@$k$ formulation of \citet{chen2021evaluating}. When reporting results averaged across multiple benchmarks, we use the unweighted mean of the corresponding per-benchmark scores.

\textbf{Randomness.} Training uses seed $42$ for data ordering and batching. Evaluation does not explicitly set a sampling seed in vLLM. Unless otherwise stated, each reported model corresponds to a single training run.

\textbf{Benchmark composition.} The mathematics evaluation contains Minerva-Math ($272$ problems), OlympiadBench English open-ended mathematics ($674$), AIME24 ($30$), AIME25 ($30$), and HMMT25 ($30$). Code evaluation uses MBPP+ ($378$), HumanEval+ ($164$), and $454$ code-generation problems from LiveCodeBench. The science evaluation uses GPQA-Diamond ($198$), MMLU-STEM ($3{,}153$ questions across $19$ STEM subjects), and OlympiadBench-Physics ($233$).

\textbf{Mathematics and science verification.}
For mathematics, we first extract an answer from an \texttt{<answer>} tag when available, otherwise from the final \texttt{\textbackslash boxed\{\}} expression, and finally from the last numerical expression in the response. We use Math-Verify\footnote{\url{https://github.com/huggingface/Math-Verify}} to determine equivalence with the reference answer after standard normalization. If symbolic verification fails, normalized strings are compared directly, with numerical answers accepted within a tolerance of $10^{-4}$. For GPQA and MMLU-STEM, we extract the selected option from the answer tag or, if absent, the final stated option.

\textbf{Code verification.} Generated code is extracted from the final fenced Python block or, when no fenced block is present, from the first top-level definition or import onward. Each execution is isolated in a Python subprocess with fixed time and memory limits for safe and reproducible evaluation. HumanEval+ and MBPP+ use the EvalPlus test harnesses, and a response is marked correct only if it passes all tests. LiveCodeBench and TACO use their provided input--output tests; outputs are compared after removing leading and trailing whitespace.

\textbf{Contamination and evaluation records.} DeepMath-103K was decontaminated by its authors against common mathematics benchmarks~\citep{he2026deepmath}. We did not perform an additional decontamination analysis for DeepMath or TACO against the evaluation sets used here. For each checkpoint and benchmark, we store per-problem correctness for all $16$ generations together with the extracted answer and first complete response.

\begin{figure}[t]
    \centering
    \includegraphics[width=0.7\linewidth]{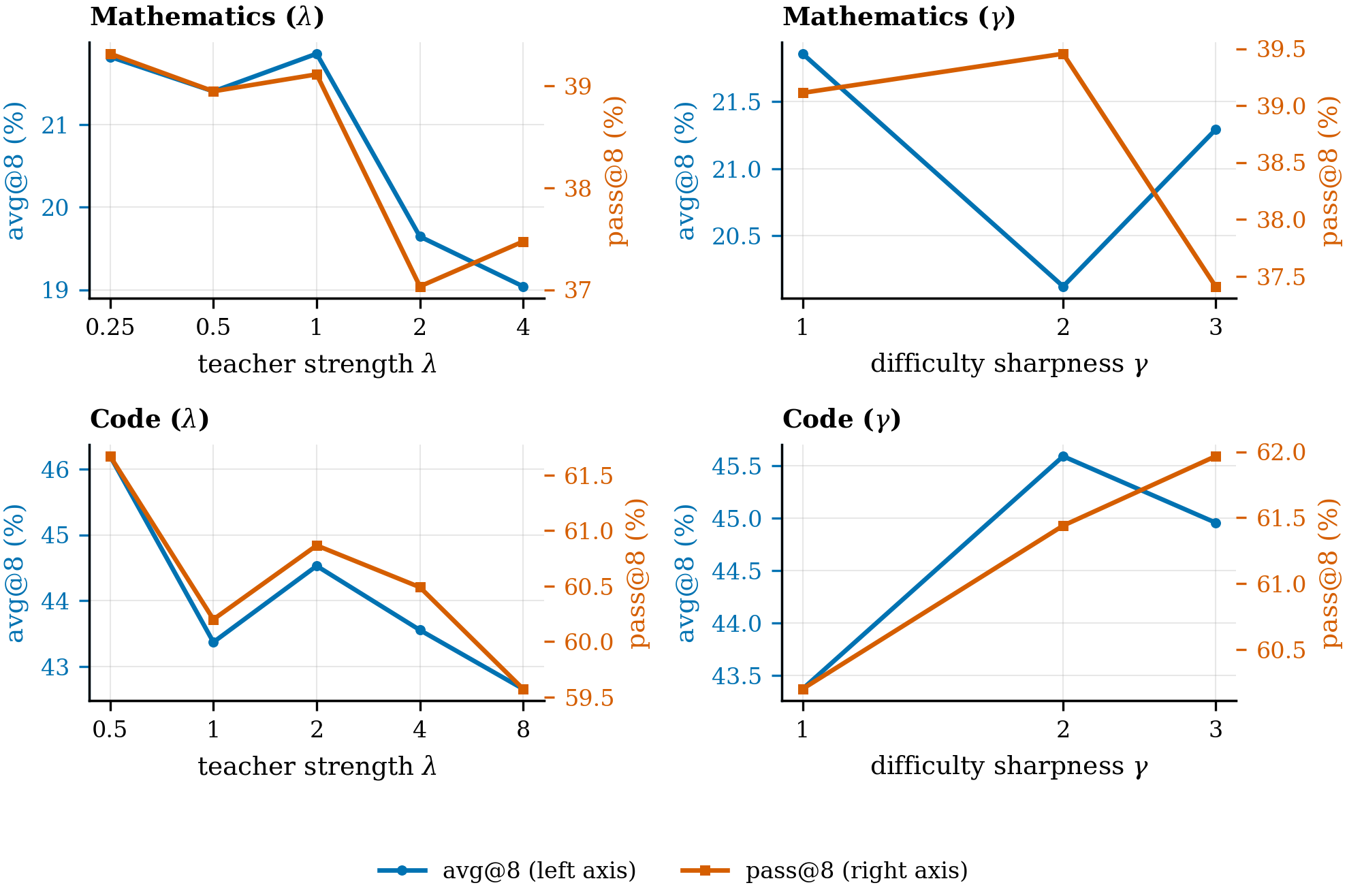}
    \vspace{-0.2cm}
    \caption{\small \textbf{Hyperparameter sweep, 1.7B student.} Teacher strength $\lambda$ (left) and difficulty sharpness $\gamma$ (right), for mathematics (top) and code (bottom); avg@8 on the left axis, pass@8 on the right. Every point is \method\ at a different setting, so this figure varies one constant rather than comparing arms. Hollow markers are the reported configuration $\lambda=\gamma=1$.}
    \label{fig:hpsweep}
    \vspace{-0.2cm}
\end{figure}

\section{Additional Experimental Results}
This section provides supplementary experimental results and analyses that complement the main paper. We report out-of-distribution evaluation on science benchmarks, additional ablations on key design choices, training-dynamics analyses, and rollout-group-size experiments. These results further characterize when \method is most effective and help isolate the contribution of each component.

\subsection{Out-of-Distribution Transfer to Science} \label{app:science}
Table~\ref{tab:science} gives the per-benchmark scores behind the science columns of Table~\ref{tab:math}: mathematics-trained students evaluated on GPQA, MMLU-STEM, and
OlympiadBench-Physics with no science training. The gains over the base student come from GPQA and MMLU-STEM (at 1.7B, \method\ reaches $33.2$ and $74.3$ avg@8 against $14.4$ and $59.2$), while OlympiadBench-Physics stays within two points of the base student for every method.

\begin{table}[t]
    \centering
    \small
    \renewcommand{\arraystretch}{1.13}
    \setlength{\tabcolsep}{4.2pt}

    \resizebox{\textwidth}{!}{%
    \begin{tabular}{lcccccc|cc}
        \toprule

        \rowcolor{headergray}
        \multirow{1}{*}{\textbf{Method}}
        & \multicolumn{2}{c}{\textbf{GPQA}}
        & \multicolumn{2}{c}{\textbf{MMLU-STEM}}
        & \multicolumn{2}{c}{\textbf{OlympiadBench-Physics}}
        & \multicolumn{2}{c}{\textbf{Average}} \\

        \cmidrule(lr){2-3}\cmidrule(lr){4-5}
        \cmidrule(lr){6-7}\cmidrule(lr){8-9}
        \rowcolor{headergray}
        & \textit{avg@8} & \textit{pass@8}
        & \textit{avg@8} & \textit{pass@8}
        & \textit{avg@8} & \textit{pass@8}
        & \textit{avg@8} & \textit{pass@8} \\

        \midrule

        \multicolumn{9}{l}{\textit{Qwen3-0.6B mathematics student}} \\

        Base
        & 13.0 & 50.8
        & 48.5 & 86.2
        & 3.6 & 9.4
        & 21.7 & 48.8 \\

        OPD (forward KL)
        & 26.1 & \textbf{77.3}
        & 58.5 & \textbf{88.5}
        & 3.6 & 11.4
        & 29.4 & \textbf{59.1} \\

        OPD (reverse KL)
        & 26.5 & 74.2
        & 59.0 & 87.8
        & \underline{4.8} & \underline{12.6}
        & 30.1 & 58.2 \\

        EOPD
        & 26.1 & 74.3
        & 59.7 & \underline{88.1}
        & 4.7 & 12.0
        & 30.2 & 58.1 \\

        Uni-OPD
        & \underline{27.0} & 73.2
        & \underline{60.2} & 87.8
        & 4.7 & 12.0
        & \underline{30.6} & 57.7 \\

        GRPO
        & 12.8 & 52.8
        & 50.1 & 85.2
        & \textbf{5.6} & 12.2
        & 22.8 & 50.1 \\

        \rowcolor{methodblue}
        \textbf{\method}
        & \textbf{28.3} & \underline{74.4}
        & \textbf{60.6} & 87.7
        & 4.6 & \textbf{12.7}
        & \textbf{31.2} & \underline{58.3} \\

        \midrule

        \multicolumn{9}{l}{\textit{Qwen3-1.7B mathematics student}} \\

        Base
        & 14.4 & 57.6
        & 59.2 & 90.8
        & 9.0 & 21.1
        & 27.5 & 56.5 \\

        OPD (forward KL)
        & \underline{32.1} & \textbf{78.5}
        & 68.0 & \textbf{92.4}
        & 7.3 & 18.2
        & 35.8 & \textbf{63.0} \\

        OPD (reverse KL)
        & 30.2 & 73.0
        & \underline{70.4} & 91.9
        & 9.3 & \underline{21.5}
        & \underline{36.7} & 62.1 \\

        EOPD
        & 30.7 & \underline{76.4}
        & 68.5 & 92.1
        & 9.3 & 20.6
        & 36.2 & \textbf{63.0} \\

        Uni-OPD
        & 31.0 & 76.2
        & 68.9 & 91.6
        & \underline{9.6} & 21.2
        & 36.5 & \textbf{63.0} \\

        GRPO
        & 7.8 & 29.1
        & 26.7 & 56.9
        & 8.4 & 20.1
        & 14.3 & 35.4 \\

        \rowcolor{methodblue}
        \textbf{\method}
        & \textbf{33.2} & 73.0
        & \textbf{74.3} & \underline{92.2}
        & \textbf{10.0} & \textbf{22.2}
        & \textbf{39.2} & \underline{62.4} \\

        \midrule

        \rowcolor{teachergray}
        \textit{Teacher 8B}
        & 44.8 & 81.7
        & 82.1 & 95.5
        & 15.9 & 31.9
        & 47.6 & 69.7 \\

        \bottomrule
    \end{tabular}}
    \vspace{-0.2cm}
    \caption{\small \textbf{Out-of-distribution transfer to Science.} avg@8 / pass@8 of mathematics-trained students on GPQA, MMLU-STEM and OlympiadBench-Physics, with no science training; the final columns average the three and are the science columns of Table~\ref{tab:math}. \textbf{Bold} / \underline{underline}: best / second-best per student.}
    \vspace{-0.2cm}
    \label{tab:science}
\end{table}

\subsection{Scaling to Qwen3-4B Students}
\label{app:scale4b}
We further evaluate \method\ with a Qwen3-4B student, using a Qwen3-14B teacher for mathematics and a Qwen3-8B teacher for code. Due to computational constraints, we compare against the base model and forward- and reverse-KL OPD baselines. On mathematics, \method\ achieves the strongest overall student performance, improving the base model from $27.1$ to $30.0$ avg@8 and from $45.5$ to $47.2$ pass@8 (Table~\ref{tab:math4b}). It also outperforms the best OPD baseline by $2.9$ avg@8 and $2.2$ pass@8 points, reaching within $0.4$ pass@8 points of the 14B teacher. On code, \method\ similarly achieves the highest overall student scores, reaching $57.3$ avg@8 and $70.5$ pass@8 compared with $54.1$ and $69.8$ for the base model (Table~\ref{tab:code4b}). Relative to the strongest OPD baseline, this corresponds to gains of $2.5$ avg@8 and $2.8$ pass@8 points. These results indicate that the benefits of combining outcome supervision with difficulty-gated teacher guidance extend to larger student models.

\begin{table}[t]
    \centering
    \small
    \renewcommand{\arraystretch}{1.12}
    \setlength{\tabcolsep}{3.2pt}
    \resizebox{\textwidth}{!}{%
    \begin{tabular}{lcccccccccc|cc}
        \toprule
        \rowcolor{headergray}
        \multirow{2}{*}{\textbf{Method}}
        & \multicolumn{2}{c}{\textbf{Minerva}}
        & \multicolumn{2}{c}{\textbf{Olympiad}}
        & \multicolumn{2}{c}{\textbf{AIME24}}
        & \multicolumn{2}{c}{\textbf{AIME25}}
        & \multicolumn{2}{c}{\textbf{HMMT25}}
        & \multicolumn{2}{c}{\textbf{Average}} \\
        \cmidrule(lr){2-3}
        \cmidrule(lr){4-5}
        \cmidrule(lr){6-7}
        \cmidrule(lr){8-9}
        \cmidrule(lr){10-11}
        \cmidrule(lr){12-13}
        \rowcolor{headergray}
        & \textit{avg@8} & \textit{pass@8}
        & \textit{avg@8} & \textit{pass@8}
        & \textit{avg@8} & \textit{pass@8}
        & \textit{avg@8} & \textit{pass@8}
        & \textit{avg@8} & \textit{pass@8}
        & \textit{avg@8} & \textit{pass@8} \\
        \midrule

        Base (Qwen3-4B)
        & 38.8 & 52.2
        & \underline{51.2} & \underline{71.7}
        & 20.4 & \underline{43.3}
        & \underline{17.5} & 35.6
        & 7.5 & \textbf{24.6}
        & \underline{27.1} & \underline{45.5} \\

        OPD (forward KL)
        & 38.9 & \textbf{56.3}
        & 48.5 & 70.4
        & \underline{20.8} & 36.5
        & 15.0 & 32.4
        & 7.9 & 19.1
        & 26.2 & 43.0 \\

        OPD (reverse KL)
        & \underline{39.0} & \underline{54.5}
        & 49.4 & 70.3
        & \underline{20.8} & 40.3
        & \underline{17.5} & \underline{39.2}
        & \underline{8.8} & 20.8
        & \underline{27.1} & 45.0 \\

        \rowcolor{methodblue}
        \textbf{\method}
        & \textbf{41.8} & 54.3
        & \textbf{53.3} & \textbf{72.1}
        & \textbf{22.1} & \textbf{47.1}
        & \textbf{21.2} & \textbf{40.6}
        & \textbf{11.7} & \underline{21.7}
        & \textbf{30.0} & \textbf{47.2} \\

        \midrule

        \rowcolor{teachergray}
        \textit{Teacher 14B}
        & 43.6 & 57.4
        & 55.4 & 74.8
        & 30.0 & 49.2
        & 20.0 & 34.5
        & 9.2 & 22.1
        & 31.6 & 47.6 \\

        \bottomrule
    \end{tabular}}
    \vspace{-0.2cm}
    \caption{\small
    \textbf{Mathematical reasoning with a Qwen3-4B student and Qwen3-14B teacher.}
    Avg@8 and pass@8 across five mathematics benchmarks.
    \textbf{Bold} and \underline{underline} denote the best and second-best student results, respectively.}
    \label{tab:math4b}
    \vspace{-0.2cm}
\end{table}
\begin{table}[t]
    \centering
    \small
    \renewcommand{\arraystretch}{1.12}
    \setlength{\tabcolsep}{3.5pt}

    \resizebox{\textwidth}{!}{%
    \begin{tabular}{lcccccc|cc}
        \toprule

        \rowcolor{headergray}
        \multirow{2}{*}{\textbf{Method}}
        & \multicolumn{2}{c}{\textbf{MBPP+}}
        & \multicolumn{2}{c}{\textbf{HumanEval+}}
        & \multicolumn{2}{c}{\textbf{LCB}}
        & \multicolumn{2}{c}{\textbf{Average}} \\

        \cmidrule(lr){2-3}
        \cmidrule(lr){4-5}
        \cmidrule(lr){6-7}
        \cmidrule(lr){8-9}
        \rowcolor{headergray}
        & \textit{avg@8} & \textit{pass@8}
        & \textit{avg@8} & \textit{pass@8}
        & \textit{avg@8} & \textit{pass@8}
        & \textit{avg@8} & \textit{pass@8} \\

        \midrule

        Base (Qwen3-4B)
        & 62.4 & \underline{78.5}
        & \underline{73.9} & \textbf{89.8}
        & \underline{26.0} & \underline{41.1}
        & 54.1 & \underline{69.8} \\

        OPD (forward KL)
        & 64.0 & \textbf{78.7}
        & 67.8 & 86.2
        & 22.8 & 38.1
        & 51.6 & 67.7 \\

        OPD (reverse KL)
        & \underline{66.9} & 77.6
        & 71.8 & 85.7
        & 25.6 & 39.7
        & \underline{54.8} & 67.7 \\

        \rowcolor{methodblue}
        \textbf{\method}
        & \textbf{67.1} & \textbf{78.7}
        & \textbf{77.1} & \underline{89.5}
        & \textbf{27.6} & \textbf{43.3}
        & \textbf{57.3} & \textbf{70.5} \\

        \midrule

        \rowcolor{teachergray}
        \textit{Teacher 8B}
        & 70.8 & 81.6
        & 80.0 & 90.2
        & 28.1 & 43.1
        & 59.6 & 71.7 \\

        \bottomrule
    \end{tabular}}
    \vspace{-0.2cm}
    \caption{\small
    \textbf{Code generation with a Qwen3-4B student and Qwen3-8B teacher.}
    Avg@8 and pass@8 on MBPP+, HumanEval+, and LiveCodeBench.
    \textbf{Bold} and \underline{underline} denote the best and second-best student results, respectively.}
    \label{tab:code4b}
    \vspace{-0.2cm}
\end{table}

\subsection{Rollout Group Size}
\label{app:group_size}
\method is motivated by a simple observation: group-relative rewards provide no learning signal when all sampled rollouts receive the same outcome. Smaller rollout groups make such reward-uniform groups more common, providing a direct test of this mechanism. We therefore retrained GRPO and \method with $n=4$ instead of $n=16$, keeping the data, teacher, optimizer, and training budget fixed (see Table~\ref{tab:n4}). Reducing $n$ from $16$ to $4$ lowers GRPO by $1.9$ avg@8 points on average across the four settings, compared with $1.2$ for \method. The difference is largest on 0.6B mathematics, where GRPO drops by $3.3$ points while \method drops by $1.6$. On code, \method also degrades less at both model sizes. We summarize the key points as follows:

\begin{itemize}
    \setlength{\itemsep}{1pt}
    \item \textbf{The advantage appears where reward is sparse.} On 0.6B mathematics \method\ moves from slightly below GRPO at $n=16$ to $+1.2$ avg@8 ahead at $n=4$, and it keeps the higher pass@8 in all four settings at both group sizes. Small groups produce more all-failure batches; there, the outcome signal is zero while \method\ still receives dense teacher guidance.
    \item \textbf{The gap is largest early in training.} With $n=4$ on 0.6B mathematics, \method\ reaches $11.8$ avg@8 by step $25$ against $7.3$ for GRPO. All-failure groups are most frequent at the start and become rarer as the student improves, and the gap narrows with them.
\end{itemize}

\begin{table}[t]
    \centering
    \small
    \renewcommand{\arraystretch}{1.13}
    \setlength{\tabcolsep}{4.2pt}

    \resizebox{\textwidth}{!}{%
    \begin{tabular}{lcccccccc}
        \toprule

        \rowcolor{headergray}
        \multirow{1}{*}{\textbf{Method}}
        & \multicolumn{4}{c}{\textbf{Mathematics}}
        & \multicolumn{4}{c}{\textbf{Code}} \\

        \cmidrule(lr){2-5}\cmidrule(lr){6-9}
        \rowcolor{headergray}
        & \multicolumn{2}{c}{\textbf{0.6B}}
        & \multicolumn{2}{c}{\textbf{1.7B}}
        & \multicolumn{2}{c}{\textbf{0.6B}}
        & \multicolumn{2}{c}{\textbf{1.7B}} \\

        \cmidrule(lr){2-3}\cmidrule(lr){4-5}
        \cmidrule(lr){6-7}\cmidrule(lr){8-9}
        \rowcolor{headergray}
        & \textit{avg@8} & \textit{pass@8}
        & \textit{avg@8} & \textit{pass@8}
        & \textit{avg@8} & \textit{pass@8}
        & \textit{avg@8} & \textit{pass@8} \\

        \midrule

        \rowcolor{sectiongray}
        \multicolumn{9}{l}{$n=16$ rollouts per prompt} \\

        \quad GRPO
        & \textbf{14.2} & 28.4
        & \textbf{21.4} & 35.2
        & 31.3 & 48.2
        & 43.4 & 55.7 \\

        \rowcolor{methodblue}
        \quad \textbf{\method}
        & 13.7 & \textbf{29.5}
        & 21.1 & \textbf{39.1}
        & \textbf{33.0} & \textbf{49.8}
        & \textbf{45.2} & \textbf{61.4} \\

        \midrule

        \rowcolor{sectiongray}
        \multicolumn{9}{l}{$n=4$ rollouts per prompt} \\

        \quad GRPO
        & 10.9 & 23.9
        & \textbf{21.3} & 36.0
        & 29.8 & 48.2
        & 40.8 & 52.3 \\

        \rowcolor{methodblue}
        \quad \textbf{\method}
        & \textbf{12.1} & \textbf{27.9}
        & 20.6 & \textbf{39.3}
        & \textbf{31.4} & \textbf{49.2}
        & \textbf{44.0} & \textbf{60.4} \\

        \bottomrule
    \end{tabular}}
    \vspace{-0.2cm}
    \caption{\small \textbf{Effect of rollout group size.} GRPO and \method\ trained with $n{=}16$ and $n{=}4$ rollouts per prompt, holding data, teacher, optimizer, and budget fixed; averages over the benchmark sets of Tables~\ref{tab:math} and~\ref{tab:code}. The $n{=}16$ rows are the runs of Tables~\ref{tab:math} and~\ref{tab:code}. Bold marks the better of the two arms within each group size.}
    \vspace{-0.2cm}
    \label{tab:n4}
\end{table}

\subsection{Bounded versus Unbounded Teacher Term} \label{app:bound}
The unbounded variant replaces $2\tanh(\Delta_{i,t}/2)$ with the raw log-probability gap $\Delta_{i,t}$ and changes nothing else. Bounding improves avg@8 by $1.0$--$1.5$ points and pass@8 by $0.8$--$1.3$ points in all four settings (see Table~\ref{tab:bound}); the gradient statistics are in the lower panel of the table and in Figure~\ref{fig:grad_norm_bars}.

\begin{table}[t]
    \centering
    \small
    \renewcommand{\arraystretch}{1.13}
    \setlength{\tabcolsep}{4.2pt}

    \resizebox{\textwidth}{!}{%
    \begin{tabular}{lcccccccc}
        \toprule

        \rowcolor{headergray}
        \textbf{Method}
        & \multicolumn{4}{c}{\textbf{Mathematics}}
        & \multicolumn{4}{c}{\textbf{Code}} \\

        \cmidrule(lr){2-5}\cmidrule(lr){6-9}
        \rowcolor{headergray}
        & \multicolumn{2}{c}{\textbf{0.6B}}
        & \multicolumn{2}{c}{\textbf{1.7B}}
        & \multicolumn{2}{c}{\textbf{0.6B}}
        & \multicolumn{2}{c}{\textbf{1.7B}} \\

        \cmidrule(lr){2-3}\cmidrule(lr){4-5}
        \cmidrule(lr){6-7}\cmidrule(lr){8-9}
        \rowcolor{headergray}
        & \textit{avg@8} & \textit{pass@8}
        & \textit{avg@8} & \textit{pass@8}
        & \textit{avg@8} & \textit{pass@8}
        & \textit{avg@8} & \textit{pass@8} \\

        \midrule

        \rowcolor{sectiongray}
        \multicolumn{9}{l}{\textbf{Held-out accuracy}} \\

        \rowcolor{methodblue}
        \textbf{\method, bounded $2\tanh(\Delta/2)$}
        & \textbf{13.7} & \textbf{29.5}
        & \textbf{21.1} & \textbf{39.1}
        & \textbf{33.0} & \textbf{49.8}
        & \textbf{45.2} & \textbf{61.4} \\

        \rowcolor{privblue}
        \method, unbounded (raw $\Delta$)
        & 12.7 & 28.2
        & 19.9 & 38.0
        & 31.9 & 49.0
        & 43.7 & 60.3 \\

        \rowcolor{sectiongray}
        \quad bounded $-$ unbounded
        & $+1.0$ & $+1.3$
        & $+1.2$ & $+1.1$
        & $+1.1$ & $+0.8$
        & $+1.5$ & $+1.1$ \\

        \midrule

        \rowcolor{sectiongray}
        \multicolumn{9}{l}{
        \textbf{Training gradient norm}
        \textit{(median / maximum over steps)}
        } \\

        \rowcolor{headergray}
        \textbf{Method}
        & \textit{median} & \textit{max}
        & \textit{median} & \textit{max}
        & \textit{median} & \textit{max}
        & \textit{median} & \textit{max} \\

        \rowcolor{methodblue}
        \textbf{\method, bounded}
        & 0.31 & 1.7
        & 0.21 & 0.7
        & 0.97 & 4.3
        & 0.80 & 2.6 \\

        \rowcolor{privblue}
        \method, unbounded
        & 0.50 & 3.4
        & 0.29 & 1.0
        & 3.22 & 342.3
        & 1.98 & 10.2 \\

        \bottomrule
    \end{tabular}}
    \vspace{-0.2cm}
    \caption{\small \textbf{Bounded versus unbounded teacher term.} Top: held-out avg@8 / pass@8 of the two \method\ variants. Bottom: median and maximum student gradient norm over training.}
    \vspace{-0.2cm}
    \label{tab:bound}
\end{table}

\textbf{Why the bound matters.} The teacher term $2\tanh(\Delta/2)$ passes small gaps through
unchanged and caps large ones at $2$. Most tokens are small gaps. A few are not: at the start of training, $2$--$9\%$ of the tokens the teacher scores have a gap above $2$ nats, and the largest in a step is $6$--$15$ nats (Table~\ref{tab:opd_side}). Unbounded, one such token enters the update with more weight than any GRPO advantage, which is at most $3.75$; bounded, with at most $2$. These tokens are most common on code, and that is where removing the bound does the most damage: the largest gradient step on 0.6B code grows from $4.3$ to $342$ (Figure~\ref{fig:grad_norm_bars}). Accuracy is also higher with the bound in every setting (Table~\ref{tab:bound}).

\begin{figure}[t]
    \centering
    \includegraphics[width=0.78\linewidth]{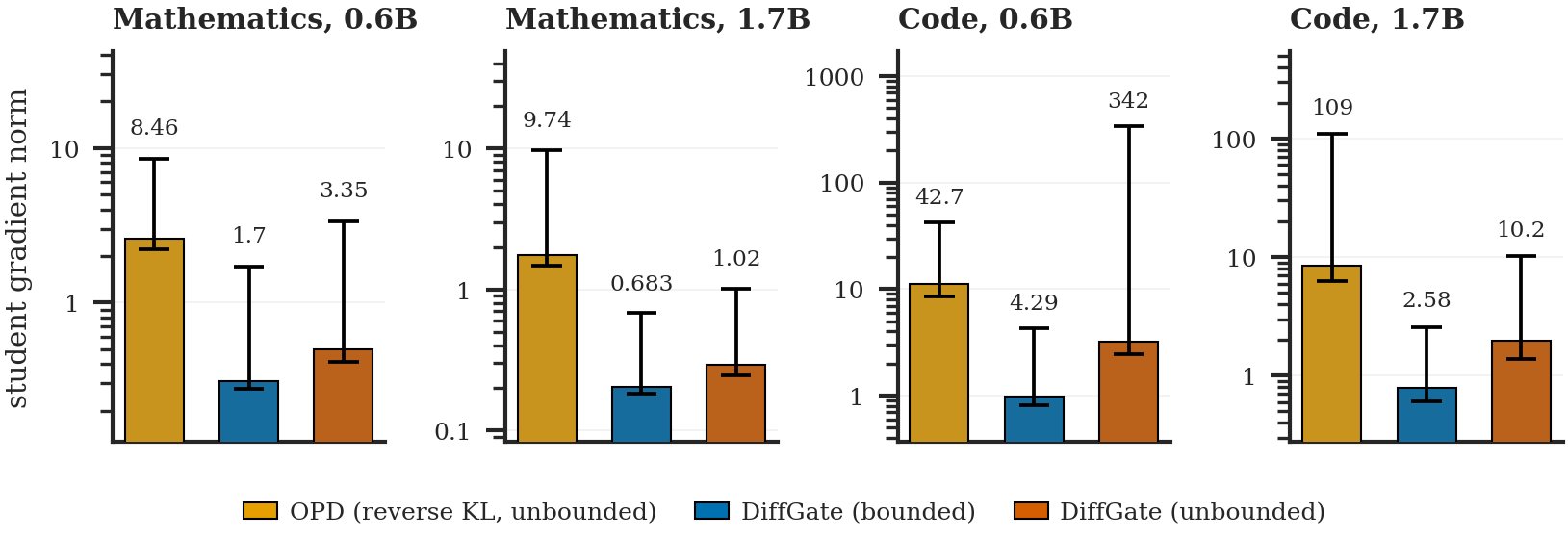}
    \vspace{-0.2cm}
    \caption{\small \textbf{Student gradient norm over training.} Bars: median. Whiskers: minimum to maximum step, with the maximum printed above. Log scale.}
    \vspace{-0.2cm}
    \label{fig:grad_norm_bars}
\end{figure}

\subsection{Difficulty Gating and the Additive GRPO$+$OPD Baseline}
\label{app:gating}
Table~\ref{tab:gating} compares \method\ with the additive objective, which applies the bounded teacher term with a constant weight to every failed trajectory. The gate matters most on mathematics, where it raises avg@8 by $2.4$ points at both model sizes and pass@8 by $2.8$ and $4.1$ points. On code, most of the effect of the teacher term is already present without the gate: gating adds $1.3$ and $1.4$ avg@8 and $0.1$ and $1.4$ pass@8. Appendix~\ref{app:one_view} shows the mechanism: the gate concentrates the teacher on groups where the reward is silent, whereas the additive rule spends it on groups the reward already handles.

\begin{table}[t]
    \centering
    \small
    \renewcommand{\arraystretch}{1.1}
    \setlength{\tabcolsep}{4.2pt}
    \resizebox{\textwidth}{!}{%
    \begin{tabular}{lcccccccc}
        \toprule
        \rowcolor{headergray}
        \multirow{1}{*}{\textbf{Objective}}
        & \multicolumn{4}{c}{\textbf{Mathematics}}
        & \multicolumn{4}{c}{\textbf{Code}} \\

        \cmidrule(lr){2-5}\cmidrule(lr){6-9}
        \rowcolor{headergray}
        & \multicolumn{2}{c}{\textbf{0.6B}} & \multicolumn{2}{c}{\textbf{1.7B}}
        & \multicolumn{2}{c}{\textbf{0.6B}} & \multicolumn{2}{c}{\textbf{1.7B}} \\

        \cmidrule(lr){2-3}\cmidrule(lr){4-5}\cmidrule(lr){6-7}\cmidrule(lr){8-9}
        \rowcolor{headergray}
        & \textit{avg@8} & \textit{pass@8} & \textit{avg@8} & \textit{pass@8}
        & \textit{avg@8} & \textit{pass@8} & \textit{avg@8} & \textit{pass@8} \\
        \midrule
        GRPO + OPD (additive, $\gamma{=}0$)
        & 11.3 & 26.7 & 18.7 & 35.0 & 31.7 & 49.7 & 43.8 & 60.0 \\

        \rowcolor{methodblue}
        \textbf{\method\ ($\gamma{=}1$)}
        & \textbf{13.7} & \textbf{29.5} & \textbf{21.1} & \textbf{39.1} & \textbf{33.0} & \textbf{49.8} & \textbf{45.2} & \textbf{61.4} \\

        \bottomrule
    \end{tabular}}
    \vspace{-0.2cm}
    \caption{\small \textbf{Difficulty gating against the additive baseline.} Both arms are GRPO plus the same bounded teacher term on failed trajectories, matched on data, teacher, $n{=}16$ rollouts, and step budget. They differ only in that term's weight: constant (additive, $\gamma{=}0$) against the failed fraction of the group (\method, $\gamma{=}1$).}
    \vspace{-0.2cm}
    \label{tab:gating}
\end{table}

\subsection{Selector and Additive Controls}
\label{app:selector}

The teacher contribution in \method\ combines three design choices beyond its addition to GRPO: a bounded token-level signal, a failed-only selector, and group-difficulty weighting. Table~\ref{tab:selector} separates these components using three additional controls. The \emph{all-rollout additive} objective uses the bounded teacher signal with constant weight, $w_i=1$. The \emph{all-rollout raw} objective additionally replaces the bounded signal with the raw gap $\Delta_{i,t}$, yielding a KDRL-style GRPO$+$reverse-KL objective at our teacher weight $\lambda=1$. Finally, \method\ without the failed-only selector retains difficulty weighting but uses $w_i=(1-p)^\gamma$ for every rollout. The full \method\ instead uses $w_i=(1-p)^\gamma(1-R_i)$. All controls use the same data, teacher, $n=16$ rollouts, and training-step budget. Each cell corresponds to a single run; differences smaller than the observed run-to-run variation are therefore interpreted cautiously.

\textbf{All-rollout teacher guidance increases coverage without consistent avg@8 gains.}
Relative to GRPO (Tables~\ref{tab:math} and~\ref{tab:code}), the bounded all-rollout additive objective lowers mathematics avg@8 by $2.3$ and $1.2$ points and changes code avg@8 by at most $0.2$ points. Pass@8 is higher in three of the four settings, including a $5.6$-point increase on 1.7B code. Thus, in these controls, uniform teacher guidance tends to increase solution coverage without consistently improving per-sample accuracy.

\textbf{Bounding does not consistently improve the all-rollout objective.}
Relative to the bounded all-rollout objective, using the raw gap increases mathematics avg@8 by $1.6$ and $0.8$ points. On code, avg@8 differs by at most $1.2$ points, while the raw gap lowers pass@8 by $1.7$ and $2.3$ points. Thus, the benefit of bounding observed in Table~\ref{tab:bound}, where teacher guidance is restricted to failed rollouts, does not transfer uniformly to all-rollout supervision.

\textbf{The failed-only selector has its clearest effect on 1.7B mathematics.}
Removing the selector lowers avg@8 by $1.3$ points and pass@8 by $2.6$ points on 1.7B mathematics. In the other three settings, avg@8 changes by at most $0.6$ points, indicating that the contribution of the selector varies across model size and domain.

\textbf{Difficulty weighting contributes strongly to the gains over uniform additive guidance.}
When teacher guidance is applied to every rollout, replacing constant weighting with $(1-p)^\gamma$ raises code avg@8 by $1.0$ and $2.6$ points for the 0.6B and 1.7B students, respectively. It also improves 0.6B mathematics by $1.4$ points, while changing 1.7B mathematics by $-0.4$ points. These results indicate that adapting teacher strength to group difficulty is an important component of the improvement over uniform additive supervision, particularly on code.

\textbf{\method\ improves over the KDRL-style all-rollout objective most clearly on code.}
Relative to the raw all-rollout GRPO$+$reverse-KL control, \method\ improves code avg@8 by $1.4$ and $3.2$ points and pass@8 by $2.0$ and $2.4$ points for the 0.6B and 1.7B students, respectively. On mathematics, the two objectives differ by at most $0.2$ avg@8 and $1.0$ pass@8 points. The clearest benefit of selective, difficulty-weighted teacher guidance therefore appears on code, where reward-uniform groups also persist longer during training (Appendix~\ref{app:adv_dynamics}).

\begin{table}[t]
    \centering
    \small
    \renewcommand{\arraystretch}{1.1}
    \setlength{\tabcolsep}{4.2pt}
    \resizebox{\textwidth}{!}{%
    \begin{tabular}{lcccccccc}
        \toprule
        \rowcolor{headergray}
        \multirow{1}{*}{\textbf{Objective}}
        & \multicolumn{4}{c}{\textbf{Mathematics}}
        & \multicolumn{4}{c}{\textbf{Code}} \\

        \cmidrule(lr){2-5}\cmidrule(lr){6-9}
        \rowcolor{headergray}
        & \multicolumn{2}{c}{\textbf{0.6B}} & \multicolumn{2}{c}{\textbf{1.7B}}
        & \multicolumn{2}{c}{\textbf{0.6B}} & \multicolumn{2}{c}{\textbf{1.7B}} \\

        \cmidrule(lr){2-3}\cmidrule(lr){4-5}\cmidrule(lr){6-7}\cmidrule(lr){8-9}
        \rowcolor{headergray}
        & \textit{avg@8} & \textit{pass@8} & \textit{avg@8} & \textit{pass@8}
        & \textit{avg@8} & \textit{pass@8} & \textit{avg@8} & \textit{pass@8} \\
        \midrule
        GRPO + bounded OPD (all rollouts)
        & 11.9 & 27.4 & 20.2 & 36.8 & 31.4 & 49.5 & 43.2 & 61.3 \\
        GRPO + OPD (additive, all rollouts, raw $\Delta$)
        & \underline{13.5} & \textbf{29.9} & \underline{21.0} & \underline{38.1} & 31.6 & 47.8 & 42.0 & 59.0 \\
        GRPO + bounded OPD (difficulty-weighted, all rollouts)
        & 13.3 & 28.4 & 19.8 & 36.5 & \underline{32.4} & \underline{49.7} & \textbf{45.8} & \textbf{61.7} \\
        \rowcolor{methodblue}
        \textbf{\method}
        & \textbf{13.7} & \underline{29.5} & \textbf{21.1} & \textbf{39.1} & \textbf{33.0} & \textbf{49.8} & \underline{45.2} & \underline{61.4} \\
        \bottomrule
    \end{tabular}}
    \vspace{-0.2cm}
    \caption{\small \textbf{Selector and additive controls.} All rows add the teacher term to GRPO; the first three apply it to every rollout. \textbf{Bold} and \underline{underlined}: best and second-best per column.}
    \vspace{-0.2cm}
    \label{tab:selector}
\end{table}

\subsection{Advantage Dynamics: \method\ versus OPD and GRPO} \label{app:adv_dynamics}
When all sixteen rollouts for a prompt receive the same reward, GRPO provides no learning signal because every trajectory has zero group-relative advantage. In an all-fail group, however, \method\ can still use the teacher to provide token-level guidance; in an all-pass group, the teacher gate is intentionally zero. We use the training logs to examine how often such reward-uniform groups occur, whether restricting teacher guidance to failed rollouts leaves substantial teacher--student disagreement, and whether the gate concentrates supervision where GRPO is least informative. GRPO and \method\ use the same data, teacher, group size, and training budget, and their statistics match at step~1, so subsequent differences reflect the training objective.

\textbf{Reward-uniform groups are common, especially on code.}
At step~1, all rollouts fail for $32$--$36\%$ of mathematics prompts and $14$--$35\%$ of code prompts. This fraction decreases rapidly on mathematics as the student improves, but remains substantial on code: roughly $20\%$ of 0.6B code prompts are still all-fail near the end of training. Including all-pass groups, $59$--$74\%$ of code prompts can provide no GRPO signal at later checkpoints (Table~\ref{tab:adv_dynamics}). \method\ addresses precisely the all-fail case by applying teacher guidance when GRPO is silent, while leaving already-solved all-pass groups unchanged. It directly explains why teacher guidance can be especially useful on code, where reward-uniform groups remain frequent throughout training.

\begin{figure}[t]
    \centering
    \includegraphics[width=\linewidth]{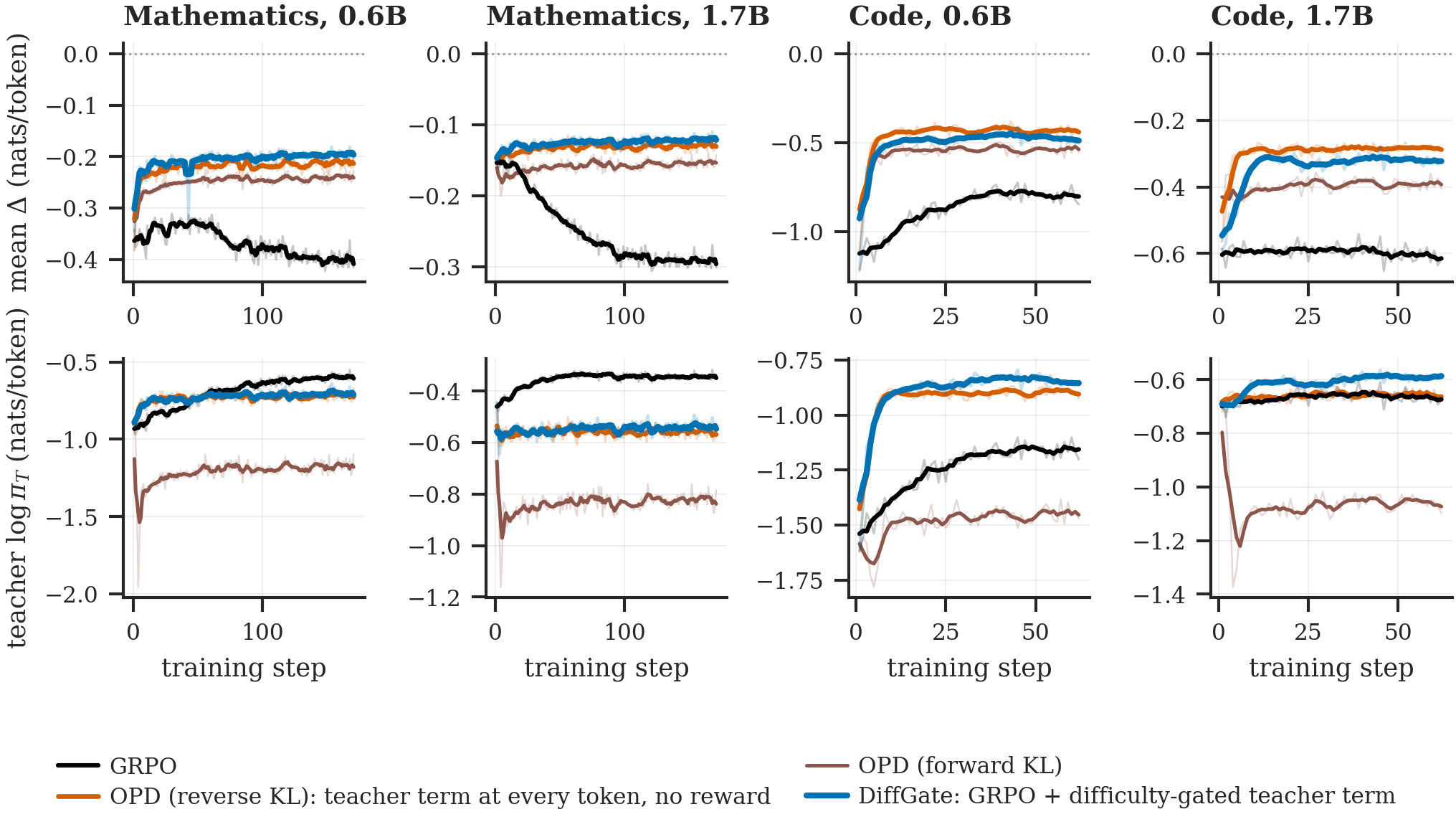}
    \vspace{-0.2cm}
    \caption{\small \textbf{Teacher-side training dynamics: GRPO, OPD and \method.} Columns are the four model--domain settings. GRPO and \method\ use $n{=}16$ rollouts and the outcome reward, \method\ adding the gated teacher term; the two OPD baselines use one rollout per prompt and no reward. All four share data, teacher, and step budget. Faint traces are per-step values, bold lines are a five-step rolling mean. Rows: mean $\Delta$ over all response tokens, and the teacher's log-likelihood of the student's own samples.}
    \vspace{-0.2cm}
    \label{fig:adv_dynamics}
\end{figure}

\begin{table}[t]
    \centering
    \small
    \renewcommand{\arraystretch}{1.13}
    \setlength{\tabcolsep}{3.2pt}

    \resizebox{\textwidth}{!}{%
    \begin{tabular}{lcccccccccccc}
        \toprule

        \rowcolor{headergray}
        \textbf{Quantity}
        & \multicolumn{3}{c}{\textbf{Mathematics, 0.6B}}
        & \multicolumn{3}{c}{\textbf{Mathematics, 1.7B}}
        & \multicolumn{3}{c}{\textbf{Code, 0.6B}}
        & \multicolumn{3}{c}{\textbf{Code, 1.7B}} \\

        \cmidrule(lr){2-4}\cmidrule(lr){5-7}
        \cmidrule(lr){8-10}\cmidrule(lr){11-13}
        \rowcolor{headergray}
        &
        \textit{Start} & \textit{GRPO} & \textit{\method}
        & \textit{Start} & \textit{GRPO} & \textit{\method}
        & \textit{Start} & \textit{GRPO} & \textit{\method}
        & \textit{Start} & \textit{GRPO} & \textit{\method} \\

        \midrule

        \rowcolor{sectiongray}
        \multicolumn{13}{l}{\textbf{Group-level reward statistics}} \\

        All-fail groups, $A_{\mathrm{GRPO}}\equiv0$ (\%)
        & 32.4 & 7.5 & 10.8
        & 35.6 & 2.6 & 2.6
        & 35.4 & 22.3 & 19.6
        & 13.7 & 9.2 & 5.7 \\

        All-pass groups, $A_{\mathrm{GRPO}}\equiv0$ (\%)
        & 22.6 & 60.9 & 49.1
        & 32.1 & 70.8 & 62.1
        & 24.0 & 36.5 & 37.4
        & 49.8 & 55.8 & 56.3 \\

        Prompts with no group signal (\%)
        & 55.0 & 68.5 & 59.8
        & 67.7 & 73.5 & 64.8
        & 59.4 & 58.8 & 57.0
        & 63.5 & 65.0 & 62.1 \\

        Token-weighted mean $A_{\mathrm{GRPO}}$
        & -0.010 & -0.051 & -0.030
        & -0.010 & -0.052 & -0.037
        & +0.004 & -0.065 & -0.023
        & -0.014 & -0.053 & -0.023 \\

        \midrule

        \rowcolor{sectiongray}
        \multicolumn{13}{l}{\textbf{Teacher signal on failed rollouts}} \\

        $|\Delta|$ on failed rollouts (nats/token)
        & 0.386 & 0.170 & 0.146
        & 0.166 & 0.087 & 0.078
        & 1.039 & 0.556 & 0.374
        & 0.315 & 0.295 & 0.188 \\

        Bounded $|2\tanh(\Delta/2)|$ on failed rollouts
        & 0.245 & 0.117 & 0.111
        & 0.129 & 0.065 & 0.063
        & 0.310 & 0.242 & 0.199
        & 0.115 & 0.112 & 0.107 \\

        Difficulty gate $(1-p)$ on failed rollouts
        & 0.69 & 0.33 & 0.41
        & 0.63 & 0.21 & 0.26
        & 0.69 & 0.55 & 0.54
        & 0.43 & 0.36 & 0.33 \\

        Failed-rollout tokens with $|\Delta|>2$ (\%)
        & 5.6 & 2.3 & 1.6
        & 1.7 & 1.0 & 0.7
        & 11.2 & 6.8 & 4.9
        & 3.7 & 3.6 & 2.4 \\

        \midrule

        \rowcolor{sectiongray}
        \multicolumn{13}{l}{\textbf{Policy and training statistics}} \\

        Mean $\Delta$, all tokens (nats/token)
        & -0.343 & -0.401 & -0.195
        & -0.143 & -0.293 & -0.121
        & -1.213 & -0.802 & -0.479
        & -0.587 & -0.609 & -0.321 \\

        Mean $\log \pi_T$, all tokens (nats/token)
        & -0.886 & -0.599 & -0.708
        & -0.445 & -0.344 & -0.542
        & -1.609 & -1.164 & -0.851
        & -0.678 & -0.669 & -0.592 \\

        Train solve rate (\%)
        & 22.4 & 60.8 & 49.3
        & 30.8 & 71.2 & 63.4
        & 22.6 & 38.0 & 37.1
        & 49.4 & 55.6 & 56.4 \\

        \bottomrule
    \end{tabular}}
    \vspace{-0.2cm}
    \caption{\small \textbf{Advantage and teacher-signal statistics over training.} ``Start'' is the mean of the two arms at step~1; GRPO and \method\ report averages over the final ten steps. Reward-uniform groups (all-fail or all-pass) have zero group-relative advantage. We measure teacher-signal statistics only on failed rollouts, and we average over the final block of all generated response tokens.}
    \vspace{-0.2cm}
    \label{tab:adv_dynamics}
\end{table}

\textbf{Selective teacher guidance remains effective.} Figure~\ref{fig:adv_dynamics} tracks the teacher--student gap and the teacher log-likelihood of student-generated samples throughout training. By the end of training, \method\ reaches a teacher--student gap comparable to reverse-KL OPD, despite applying teacher guidance only to failed rollouts. In contrast, the gap under GRPO increases on mathematics, from $0.15$ to $0.29$ for the 1.7B student. At the same time, the teacher assigns increasing likelihood to GRPO samples, suggesting that the larger gap does not simply reflect poorer generations. Together with the entropy trends in Figure~\ref{fig:why_diffgate}b, this pattern is consistent with GRPO becoming more concentrated on a narrower subset of teacher-supported responses. GRPO remains competitive on avg@8 while \method\ achieves higher pass@8.

\textbf{The gate focuses teacher guidance on failed and difficult rollouts.} Figure~\ref{fig:opd_side} examines the quantities directly used by the gate. Teacher disagreement is initially similar on correct and failed rollouts but becomes larger on correct rollouts later in training. By the end, it is $1.3$--$1.7\times$ higher on correct rollouts under \method\ and $1.5$--$3.5\times$ under GRPO (Table~\ref{tab:opd_side}). Applying teacher guidance uniformly would therefore spend substantial supervision on trajectories that already satisfy the verifier. The factor $(1-R_i)$ avoids this by restricting teacher guidance to failed rollouts. The average applied teacher weight also decreases as performance improves, from roughly $0.5$ to $0.1$--$0.2$ on mathematics, consistent with the intended difficulty-dependent behavior described in Section~\ref{sec:theory_relative_strength}.

\begin{figure}[t]
    \centering
    \includegraphics[width=\linewidth]{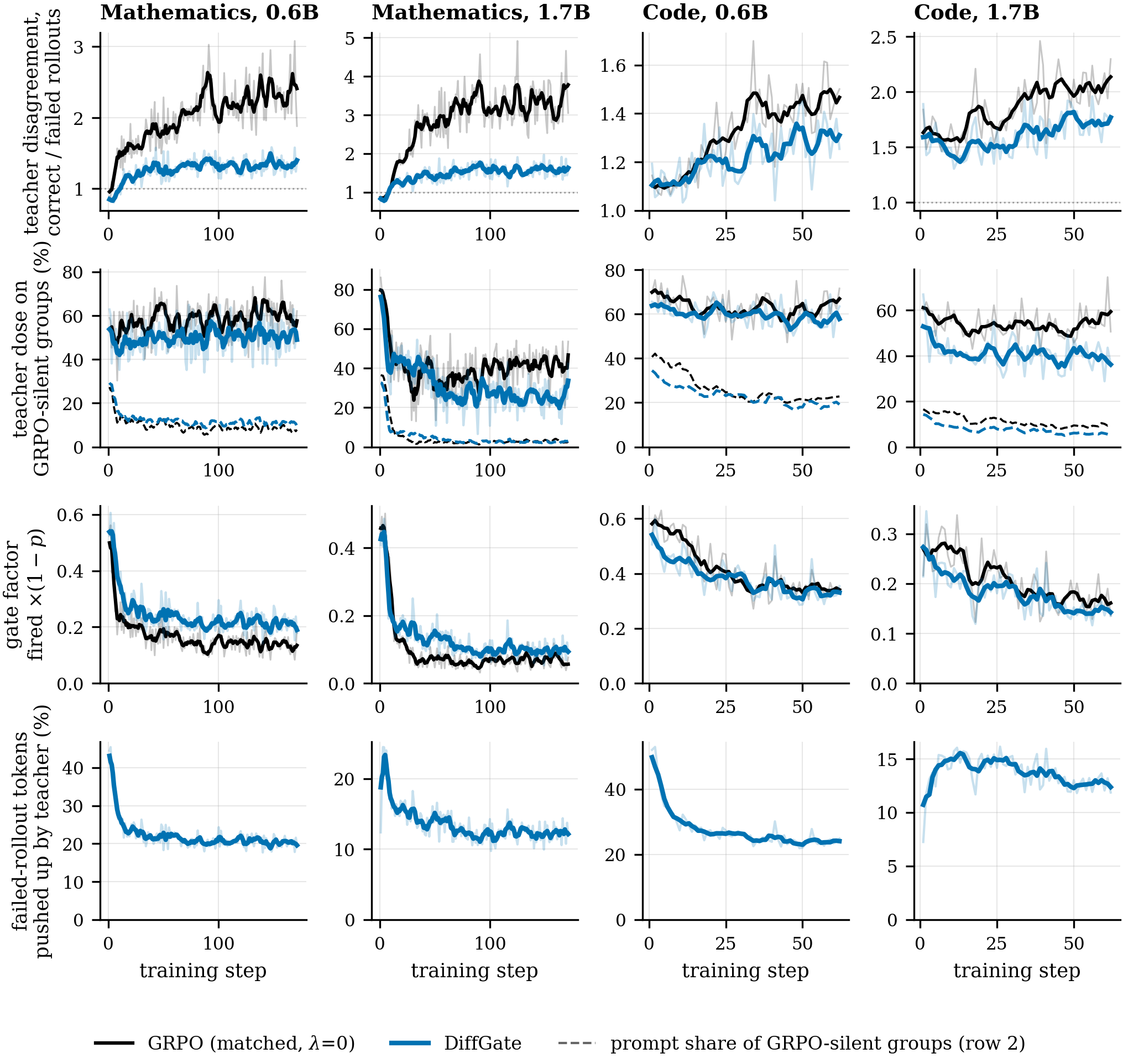}
    \vspace{-0.2cm}
    \caption{\small \textbf{Reward-based routing of teacher guidance.} the rows show: (1) the ratio of the all-token mean of $-\Delta$ to the failed-rollout mean of $|\Delta|$, with the dotted line marking a ratio of $1$; (2) the fraction of the applied teacher contribution assigned to all-failure groups, together with the fraction of prompts belonging to such groups (dashed); (3) the effective gate factor, defined by the fraction of trajectories on which the gate fires multiplied by their mean difficulty weight; and (4) for \method, the fraction of failed-rollout tokens on which the teacher signal opposes the outcome advantage.}
    \vspace{-0.2cm}
    \label{fig:opd_side}
\end{figure}

\begin{table}[t]
    \centering
    \small
    \renewcommand{\arraystretch}{1.13}
    \setlength{\tabcolsep}{3.2pt}

    \resizebox{\textwidth}{!}{%
    \begin{tabular}{lcccccccc}
        \toprule

        \rowcolor{headergray}
        \textbf{Quantity}
        & \multicolumn{2}{c}{\textbf{Mathematics, 0.6B}}
        & \multicolumn{2}{c}{\textbf{Mathematics, 1.7B}}
        & \multicolumn{2}{c}{\textbf{Code, 0.6B}}
        & \multicolumn{2}{c}{\textbf{Code, 1.7B}} \\

        \cmidrule(lr){2-3}\cmidrule(lr){4-5}
        \cmidrule(lr){6-7}\cmidrule(lr){8-9}
        \rowcolor{headergray}
        & \textit{GRPO} & \textit{\method}
        & \textit{GRPO} & \textit{\method}
        & \textit{GRPO} & \textit{\method}
        & \textit{GRPO} & \textit{\method} \\

        \midrule

        \rowcolor{sectiongray}
        \multicolumn{9}{l}{\textbf{Teacher disagreement and reward-signal availability}} \\

        Correct/failed disagreement ratio
        & $0.99{\to}2.41$ & $0.83{\to}1.34$
        & $0.85{\to}3.48$ & $0.80{\to}1.57$
        & $1.10{\to}1.45$ & $1.13{\to}1.29$
        & $1.68{\to}2.07$ & $1.62{\to}1.72$ \\

        Teacher dose on GRPO-silent groups (\%)
        & $55{\to}58$ & $52{\to}52$
        & $78{\to}41$ & $69{\to}26$
        & $69{\to}64$ & $63{\to}58$
        & $58{\to}56$ & $52{\to}39$ \\

        GRPO-silent groups, share of prompts (\%)
        & $26{\to}8$ & $28{\to}11$
        & $36{\to}3$ & $31{\to}3$
        & $40{\to}22$ & $32{\to}20$
        & $15{\to}9$ & $13{\to}6$ \\

        \midrule

        \rowcolor{sectiongray}
        \multicolumn{9}{l}{\textbf{Applied teacher guidance}} \\

        Gate factor, fired $\times(1-p)$
        & $0.48{\to}0.13$ & $0.54{\to}0.21$
        & $0.47{\to}0.06$ & $0.44{\to}0.10$
        & $0.58{\to}0.35$ & $0.50{\to}0.34$
        & $0.25{\to}0.16$ & $0.25{\to}0.14$ \\

        Applied teacher dose per token
        & $0.112{\to}0.016$ & $0.122{\to}0.023$
        & $0.065{\to}0.004$ & $0.061{\to}0.006$
        & $0.182{\to}0.086$ & $0.145{\to}0.068$
        & $0.033{\to}0.018$ & $0.034{\to}0.016$ \\

        Failed tokens pushed up (\%)
        & -- & $41{\to}20$
        & -- & $21{\to}12$
        & -- & $44{\to}24$
        & -- & $12{\to}13$ \\

        \midrule

        \rowcolor{sectiongray}
        \multicolumn{9}{l}{\textbf{Teacher confidence on failed-rollout tokens}} \\

        Teacher prob.\ on pushed-up tokens
        & -- & $0.68{\to}0.43$
        & -- & $0.56{\to}0.31$
        & -- & $0.69{\to}0.55$
        & -- & $0.49{\to}0.38$ \\

        Teacher prob.\ on all failed tokens
        & $0.54{\to}0.30$ & $0.56{\to}0.38$
        & $0.57{\to}0.25$ & $0.54{\to}0.30$
        & $0.61{\to}0.45$ & $0.55{\to}0.47$
        & $0.46{\to}0.37$ & $0.45{\to}0.35$ \\

        \midrule

        \rowcolor{sectiongray}
        \multicolumn{9}{l}{\textbf{Tail of the raw gap $\Delta$, before the bound}} \\

        Tokens with $|\Delta|>2$ nats (\%)
        & $5.2{\to}2.3$ & $4.1{\to}1.6$
        & $1.8{\to}1.0$ & $1.7{\to}0.7$
        & $11.1{\to}6.8$ & $8.7{\to}4.8$
        & $4.3{\to}3.6$ & $3.8{\to}2.4$ \\

        Largest $|\Delta|$ in a step (nats)
        & $8.9{\to}4.9$ & $9.3{\to}5.5$
        & $6.3{\to}3.3$ & $6.3{\to}3.1$
        & $17.7{\to}13.6$ & $14.6{\to}10.6$
        & $10.2{\to}8.5$ & $9.6{\to}6.3$ \\

        \midrule

        \rowcolor{sectiongray}
        \multicolumn{9}{l}{\textbf{Optimisation} \textit{(median / maximum over training)}} \\

        Student gradient norm
        & $0.22$ / $0.7$ & $0.31$ / $1.7$
        & $0.31$ / $212.9$ & $0.21$ / $0.7$
        & $0.67$ / $1.2$ & $0.97$ / $4.3$
        & $1.36$ / $64.9$ & $0.80$ / $2.6$ \\

        \bottomrule
    \end{tabular}}
    \vspace{-0.2cm}
    \caption{\small \textbf{Teacher-side diagnostics over training.} Each entry reports start $\rightarrow$ end, where start'' averages the first five training steps and end'' averages the final ten. For GRPO, teacher-derived quantities are computed only for diagnostic purposes and are never used for optimization; rows that require an applied teacher update are therefore left blank. The disagreement block reports the raw teacher--student log-probability difference $\Delta$ before bounding, evaluated on teacher-scored tokens. The gradient row instead reports the median/maximum gradient norm over the full training run.}
    \vspace{-0.2cm}
    \label{tab:opd_side}
\end{table}

\subsection{Training Reward and Response Length} \label{app:reward_dynamics}
Figure~\ref{fig:reward_dynamics} tracks the verifier pass rate over the $16$ rollouts, the mean response length, and the policy entropy for GRPO, reverse-KL OPD, \method\ and the additive variant with no difficulty weighting ($\gamma=0$). All arms share the training set and step budget.

\begin{figure}[t]
    \centering
    \includegraphics[width=\linewidth]{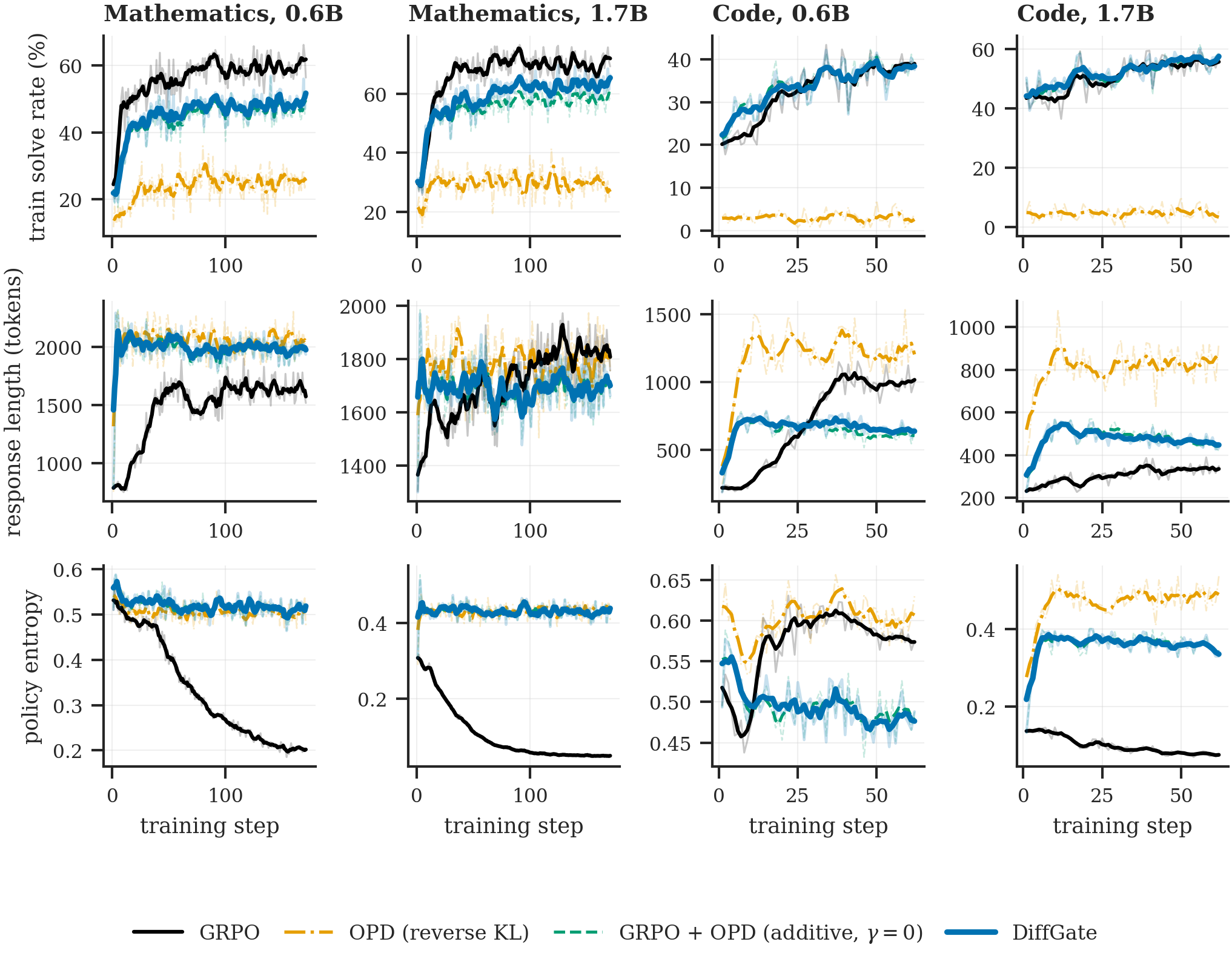}
    \vspace{-0.2cm}
    \caption{\small \textbf{Training reward, response length and policy entropy.} Rows: verifier pass rate of the $n{=}16$ rollouts on each step's training prompts; mean response length in tokens; mean token entropy of the student. Faint traces are per-step values, bold lines a five-step rolling mean. \method\ is a dense retrain; the remaining arms are the original runs. The reverse-KL arm's solve rate is a single-sample diagnostic, not the
    quantity it optimises.}
    \vspace{-0.2cm}
    \label{fig:reward_dynamics}
\end{figure}

\textbf{Teacher guidance affects response behavior early in training.} One of the clearest early changes is response length. Teacher-guided models show a rapid shift in response length during the first few training steps, coinciding with the decrease in teacher--student disagreement shown in Figure~\ref{fig:adv_dynamics}, while GRPO changes more gradually. After this initial adjustment, response lengths become relatively stable, consistent with later teacher guidance reflecting less of the early response-format differences.

\textbf{We observe no clear evidence of verifier exploitation.} Mathematics rewards use symbolic answer checks, while code rewards rely on hidden unit tests executed in a sandbox, avoiding reliance on a learned reward model. We do observe behaviors such as response-length growth and format specialization, particularly under GRPO, but these do not by themselves indicate reward exploitation. The improvements on code also transfer consistently to held-out evaluation benchmarks. Since we did not perform a dedicated audit for verifier exploits, we interpret this as an absence of observed evidence rather than evidence that exploitation cannot occur.

\subsection{Why the Gate, in One View}\label{app:one_view}
Figure~\ref{fig:why_diffgate} compares four aspects of training for the 1.7B mathematics setting. Together, they show that \method\ maintains stable updates, preserves higher policy entropy, gradually reduces the relative contribution of teacher guidance, and suppresses unnecessary teacher supervision as the student improves. Results for the remaining settings are provided in Figure~\ref{fig:grad_norm_bars} and Table~\ref{tab:opd_side}.

\textbf{Bounded teacher guidance is associated with more stable updates.}
\method\ has a lower median gradient norm than GRPO ($0.205$ versus $0.311$) and a much smaller maximum ($0.68$ versus $212.9$; Figure~\ref{fig:why_diffgate}a). The large GRPO gradients occur late in training, around the same period in which its policy entropy falls below $0.1$. Reverse-KL OPD instead maintains substantially larger gradients throughout training, with a median of $1.74$. Bounding the teacher signal also matters: replacing $2\tanh(\Delta/2)$ with the raw log-probability gap increases the median gradient norm from $0.205$ to $0.293$ and reduces evaluation performance from $21.1/39.1$ to $19.9/38.0$ avg@8/pass@8. Bounding thus limits extreme teacher contributions while retaining useful supervision.

\textbf{\method\ maintains a less concentrated policy.}
Figure~\ref{fig:why_diffgate}b shows the difference in entropy dynamics. Under GRPO, entropy decreases from approximately $0.30$ to $0.048$, crossing $0.1$ midway through training. In contrast, \method\ and reverse-KL OPD maintain substantially higher entropy, ending near $0.42$--$0.44$. The small difference between these two teacher-guided methods is within their normal step-to-step variation, so we interpret them as exhibiting similar entropy behavior. Together with the higher pass@8 of \method, this pattern is consistent with retaining greater diversity across sampled solutions, although it does not establish a causal relationship.

\textbf{Teacher guidance becomes relatively less important as training progresses.}
Figure~\ref{fig:why_diffgate}c compares the teacher and GRPO components of \method\ on failed rollouts. The applied teacher contribution decreases from $0.087$ per token during the first five steps to $0.017$ during the final ten steps. At the same time, all-failure groups become much less frequent, falling from roughly $31\%$ to $3\%$. Thus, early in training the teacher provides useful supervision on prompts where group-relative reward can be uninformative, while its contribution decreases as informative reward variation becomes more common. This behavior is consistent with the intended role of difficulty gating rather than requiring an explicit annealing schedule.

\textbf{Difficulty gating reduces unnecessary teacher supervision.}
Figure~\ref{fig:why_diffgate}d compares the teacher contribution on the same \method\ rollouts with and without difficulty weighting. Under the proposed gate, the applied teacher contribution per response token decreases from $0.061$ to $0.006$ over training, whereas the corresponding additive weighting decreases from $0.097$ to $0.024$. Consequently, the difference between the two weighting schemes becomes larger as the student improves. This follows directly from the factor $(1-p)^\gamma$: teacher guidance remains strongest on difficult and all-failure groups and is progressively reduced on groups the student solves more often. The corresponding ablation in Table~\ref{tab:gating} shows that this difficulty-dependent weighting improves performance over the additive variant in the 1.7B mathematics setting.

\subsection{Where Does \method\ Help? Difficulty and Teacher Reliability}
\label{app:bands}
We analyze how \method's gains vary with problem difficulty and teacher reliability. Held-out code problems are grouped by the number of correct responses produced by the base student out of $16$, ranging from never solved ($0/16$) to frequently solved ($12$--$16/16$). All methods use the same $16$-sample evaluation protocol (Table~\ref{tab:bands}), and within each band we compare \method\ and GRPO on the same problems. We additionally report exploratory sign tests; two of the eight band-level comparisons have unadjusted $p<0.05$. Over the full evaluation sets, \method\ also improves over reverse-KL OPD by $1.7$--$3.1$ avg@8 points and $1.3$--$2.8$ pass@8 points across the four model--domain settings (Tables~\ref{tab:math} and~\ref{tab:code}).

\begin{table}[t]
    \centering
    \small
    \renewcommand{\arraystretch}{1.13}
    \setlength{\tabcolsep}{3.4pt}

    \resizebox{\textwidth}{!}{%
    \begin{tabular}{lrrrrrrr}
        \toprule

        \rowcolor{headergray}
        \textbf{Base-student band}
        & \textbf{$n$}
        & \textbf{Teacher pass@16}
        & \textbf{Base}
        & \textbf{GRPO}
        & \textbf{\method}
        & \textbf{$\Delta$}
        & \textbf{Wins (\method:GRPO)} \\

        \midrule

        \rowcolor{sectiongray}
        \multicolumn{8}{l}{
        \textbf{Code, 0.6B student}
        \textit{(994 problems; teacher fails on 50\% of the never-solved band, 31\% overall)}
        } \\

        \quad Never solved (0/16)
        & 616 & 50\% & 0.00 & 0.86 & 0.79 & $-0.07$ & 16:21 \\

        \quad 1--4 / 16
        & 140 & 96\% & 2.13 & 6.46 & 6.64 & $+0.18$ & 22:17 \\

        \quad 5--11 / 16
        & 132 & 98\% & 8.08 & 11.22 & 12.35 & $+1.13$ & 33:17$^{\dagger}$ \\

        \quad 12--16 / 16
        & 106 & 98\% & 14.09 & 14.74 & 15.28 & $+0.55$ & 9:3 \\

        \midrule

        \rowcolor{sectiongray}
        \multicolumn{8}{l}{
        \textbf{Code, 1.7B student}
        \textit{(994 problems; teacher fails on 58\% of the never-solved band, 30\% overall)}
        } \\

        \quad Never solved (0/16)
        & 515 & 42\% & 0.00 & 0.90 & 1.42 & $+0.52$ & 45:16$^{\dagger}$ \\

        \quad 1--4 / 16
        & 109 & 94\% & 2.10 & 5.93 & 5.81 & $-0.12$ & 18:16 \\

        \quad 5--11 / 16
        & 67 & 96\% & 8.15 & 9.72 & 9.52 & $-0.19$ & 10:14 \\

        \quad 12--16 / 16
        & 303 & 98\% & 15.48 & 14.59 & 13.96 & $-0.63$ & 17:38$^{\dagger}$ \\

        \bottomrule
    \end{tabular}}
    \vspace{-0.2cm}
    \caption{\small \textbf{Performance across base-student difficulty bands.} Problems are grouped by the number of successful responses produced by the base student out of $16$. ``Teacher pass@16'' denotes the fraction of problems in each band for which the Qwen3-8B teacher produces at least one correct response under the same sampling protocol. Base, GRPO, and \method\ report the mean number of successful generations out of $16$, and $\Delta$ denotes \method\ minus GRPO. ``Wins'' counts problems favoring \method\ versus GRPO for which a two-proportion $z$-test comparing their $16$ sampled outcomes satisfies $|z|\geq2$, with \method\ listed first.}
    \vspace{-0.2cm}
    \label{tab:bands}
\end{table}

\textbf{Difficulty gating targets regions where additional guidance can be most useful.} Teacher reliability is lowest on the hardest code problems: teacher pass@16 is only $42$--$50\%$ in the never-solved band, compared with $94$--$98\%$ in the remaining bands. This motivates applying teacher guidance adaptively rather than uniformly. \method\ uses the student's group solve rate to emphasize harder rollout groups while progressively reducing teacher weight as the student becomes more successful.

\textbf{The largest gains occur in harder or intermediate difficulty regions.} For the 0.6B code student, \method\ achieves its largest improvement in the $5$--$11/16$ band, with $+1.13$ additional successful samples per problem. For the 1.7B student, the largest gain occurs on problems never solved by the base model ($+0.52$), whereas performance is lower than GRPO on the easiest $12$--$16/16$ band ($-0.63$). This pattern is consistent with the intended behavior of difficulty gating: teacher guidance can be more beneficial on difficult or intermediate groups, while its value may diminish once the student already solves a problem reliably.

Overall, Table~\ref{tab:bands} shows that the effect of teacher guidance varies substantially with problem difficulty. A fixed additive teacher signal does not account for this heterogeneity, whereas difficulty gating provides a simple mechanism for concentrating supervision on harder rollout groups and reducing its influence as the student becomes more successful. The remaining variation across difficulty bands also suggests that incorporating teacher reliability could further improve the routing mechanism.

\begin{figure}[t]
    \centering
    \includegraphics[width=\linewidth]{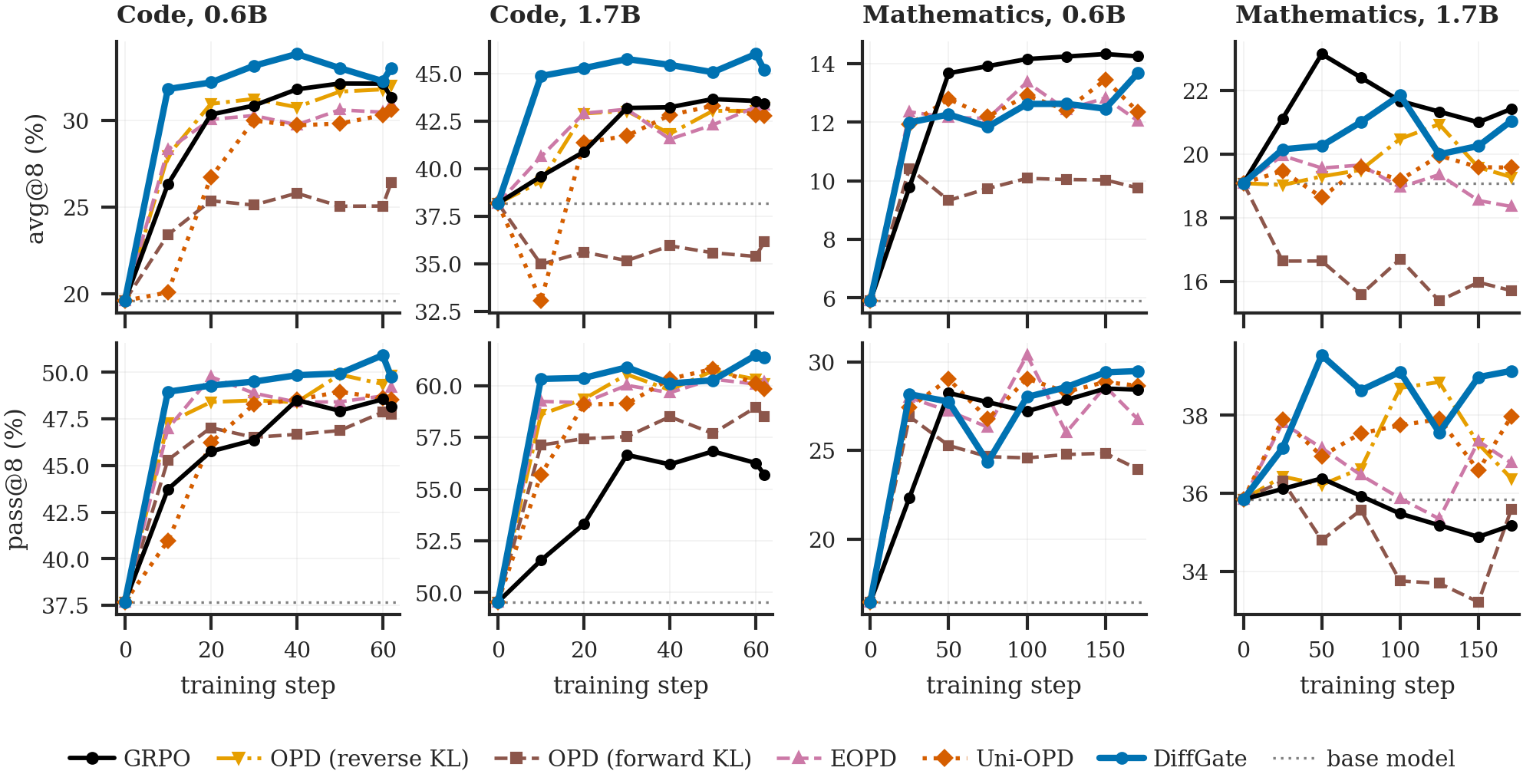}
    \vspace{-0.3cm}
    \caption{\small \textbf{Performance over training.} avg@8 (top) and pass@8 (bottom) on the benchmark sets of Tables~\ref{tab:math} and~\ref{tab:code}; step~$0$ is the base model (dotted line). GRPO and \method\ were evaluated at steps $50$ and $62$ on code, and GRPO at steps $50$, $100$, $150$ and $171$ on mathematics; their other points are filled in as described in the text.}
    \label{fig:stepwise}
    \vspace{-0.2cm}
\end{figure}

\subsection{Training Dynamics and Performance}\label{app:stepwise}
Figure~\ref{fig:stepwise} tracks avg@8 and pass@8 throughout training on the benchmark sets from Tables~\ref{tab:math} and~\ref{tab:code}, with step~$0$ corresponding to the base model. Because only a few checkpoints were saved for the main GRPO and \method\ runs, the curves are supplemented with matched retrains evaluated more frequently. On code, \method\ leads GRPO in pass@8 from the earliest checkpoint and maintains this advantage throughout training, while achieving most of its avg@8 improvement within the first few updates. At 1.7B, GRPO narrows the avg@8 gap later in training, but \method\ retains stronger solution coverage. On mathematics, GRPO reaches its highest avg@8 early and later declines, whereas \method\ maintains higher pass@8 throughout the observed checkpoints. These trends support the main results by showing that \method's gains in solution coverage emerge early and remain stable during training, rather than appearing only at the final checkpoint.

\section{Qualitative Examples}
\label{app:examples}
This section presents illustrative generations underlying the quantitative results in Tables~\ref{tab:math} and~\ref{tab:code}. Every comparison uses the same Qwen3-1.7B student architecture, the same problem, and the same $n=16$ evaluation pool; the two arms therefore differ only in their training objective. Counts denote the number of correct generations out of the same $16$ samples.
We show the available reasoning and final-answer portion of the first sampled response from each arm. The examples were selected from problems exhibiting a large performance gap between GRPO and \method. We do not infer errors that are not visible in the displayed generations.

\subsection{Code Examples}
Code was evaluated by executing the generated program against the reference unit tests. Hence, unlike free-form mathematics evaluation, a large
$16$-versus-$0$ split cannot be explained purely by answer extraction. Both examples below are
ordinary local implementation errors: an off-by-one in a sieve bound and an index-parity confusion. A second failure mode we observed, but do not illustrate here, is GRPO continuing past the requested function to emit its own test code; because the whole program is executed, such extra code can invalidate an otherwise correct implementation.

\begin{exampleheader}{
Example C.1: MBPP+ \hfill
\textnormal{\method: \textbf{16/16} \quad GRPO: \textbf{0/16}}
}
\textbf{Observation.} The task asks for the number of primes \emph{strictly below} $n$. GRPO constructs the sieve through index $n$, so a prime value of $n$ is incorrectly included. Its appended assertion
\texttt{count\_Primes\_nums(5) == 2} also contradicts the implementation it just generated.
\end{exampleheader}

\begin{problembox}
\textbf{Prompt.}
Write a Python function that takes a non-negative number and returns the number of prime numbers less than the given non-negative number.
\end{problembox}

\begin{grpotext}
GRPO initializes a sieve of length $n+1$, marks composites up to $n$, and finally sums the entire Boolean array.
\end{grpotext}

\begin{grpocode}
def count_Primes_nums(n):
    if n < 2:
        return 0

    # Initialize a list to mark non-prime numbers
    is_prime = [True] * (n + 1)
    is_prime[0] = is_prime[1] = False

    for i in range(2, int(n**0.5) + 1):
        if is_prime[i]:
            for j in range(i*i, n+1, i):
                is_prime[j] = False

    return sum(is_prime)

# Test cases
assert count_Primes_nums(5) == 2
assert count_Primes_nums(10) == 4
assert count_Primes_nums(100) == 25
\end{grpocode}

\begin{methodtext}
\method instead constructs a Boolean array of length $n$, so that its valid indices are exactly $0,\ldots,n-1$. It marks composites only within this range and therefore counts primes strictly below $n$.
\end{methodtext}

\begin{methodcode}
def count_Primes_nums(n):
    if n < 2:
        return 0

    # Initialize a boolean list to mark primes
    is_prime = [True] * n
    is_prime[0], is_prime[1] = False, False

    for i in range(2, int(n**0.5) + 1):
        if is_prime[i]:
            for j in range(i*i, n, i):
                is_prime[j] = False

    return sum(is_prime)
\end{methodcode}

\begin{methodtext}
The response additionally notes that the implementation handles the $n<2$ edge case and uses the Sieve of Eratosthenes for efficiency.
\end{methodtext}

\begin{exampleheader}{
Example C.2: HumanEval+ \hfill
\textnormal{\method: \textbf{16/16} \quad GRPO: \textbf{0/16}}
}
\textbf{Observation.}
This example is a local logic error. GRPO selects odd values at \emph{odd} indices, whereas the specification asks for odd values at
\emph{even} indices.
\end{exampleheader}

\begin{problembox}
\textbf{Prompt.}
\begin{quote}
Given a non-empty list of integers, return the sum of all of the odd elements that are in even positions.

\[
\begin{aligned}
\texttt{solution([5,8,7,1])} &\Rightarrow 12,\\
\texttt{solution([3,3,3,3,3])} &\Rightarrow 9,\\
\texttt{solution([30,13,24,321])} &\Rightarrow 0.
\end{aligned}
\]
\end{quote}
\end{problembox}

\begin{grpotext}
GRPO iterates over index--value pairs, but checks
\[
    i \bmod 2 = 1,
\]
thereby selecting odd indices.
\end{grpotext}

\begin{grpocode}
def solution(lst):
    total = 0
    for i, num in enumerate(lst):
        if i % 2 == 1 and num % 2 != 0:
            total += num
    return total
\end{grpocode}

\begin{methodtext}
\method explicitly states the two required conditions:

\begin{enumerate}
    \item the element must be odd, and
    \item its index must be even.
\end{enumerate}

It then iterates through the list, applies both conditions, accumulates the matching values, and returns the resulting sum.
\end{methodtext}

\begin{methodcode}
def solution(lst):
    total = 0
    for i, num in enumerate(lst):
        if num % 2 != 0 and i % 2 == 0:
            total += num
    return total
\end{methodcode}

\subsection{Mathematics Examples}
For both mathematics pairs we additionally require that the reference-answer string does \emph{not} appear anywhere in the losing GRPO response, so each one illustrates a substantive reasoning or final-decision failure rather than a case where the evaluator merely fails to extract an already-correct answer.


\begin{exampleheader}{
Example M.1: OlympiadBench \hfill
\textnormal{\method: \textbf{15/16} \quad GRPO: \textbf{0/16}}
}
\textbf{Observation.}
GRPO makes an off-by-one error in the admissible range of bases.
\end{exampleheader}

\begin{problembox}
Let $T=41$. Compute the number of positive integers $b$ such that the number $T$ has exactly two digits when written in base $b$.

\medskip
\textbf{Reference answer:} $35$.
\end{problembox}

\begin{grpotext}
\textbf{GRPO reasoning.}

For $T$ to have exactly two digits in base $b$, the response writes
\[
    b < T < b^2.
\]

Substituting $T=41$ gives
\[
    b < 41 < b^2.
\]

The response therefore concludes
\[
    \sqrt{41} < b < 41.
\]

Since
\[
    \sqrt{41}\approx 6.403,
\]
it lists the possible integer bases as
\[
    b=7,8,9,\ldots,40.
\]

It then counts
\[
    40-7+1=34
\]
values and outputs
\[
    \boxed{34}.
\]
\end{grpotext}

\begin{methodtext}
\textbf{\method\ reasoning.}

\method instead uses
\[
    b \leq 41 < b^2.
\]

The right inequality gives
\[
    b^2>41
    \quad\Longrightarrow\quad
    b>\sqrt{41}
    \approx 6.403.
\]

Because $b$ is a positive integer,
\[
    b\geq 7.
\]

The left inequality gives
\[
    b\leq41.
\]

Hence the complete range is
\[
    7\leq b\leq41.
\]

The number of integers in this range is
\[
    41-7+1=35.
\]

Therefore,
\[
    \boxed{35}.
\]
\end{methodtext}

\begin{exampleheader}{
Example M.2: OlympiadBench \hfill
\textnormal{\method: \textbf{10/16} \quad GRPO: \textbf{0/16}}
}
\textbf{Observation.}
GRPO identifies both valid solutions during its reasoning but outputs only one in the final answer, whereas \method\ retains the complete solution set.
\end{exampleheader}

\begin{problembox}
Find all functions
\[
    f:\mathbb{Q}\rightarrow\mathbb{Q}
\]
such that, for all rational $x$ and $y$,
\[
    f(xf(x)+y)=f(y)+x^2.
\]

\medskip
\textbf{Reference answer:}
\[
    f(x)=x
    \qquad\text{or}\qquad
    f(x)=-x.
\]
\end{problembox}

\begin{grpotext}
\textbf{GRPO reasoning near the conclusion.}

The response correctly identifies the two candidate solutions,
\[
    f(x)=x
    \qquad\text{and}\qquad
    f(x)=-x,
\]
and recognizes that both satisfy the functional equation.

However, when producing its final answer, it retains only
\[
    \boxed{f(x)=x}.
\]

Thus, although the reasoning contains both valid solutions, the explicit final answer is incomplete.
\end{grpotext}

\begin{methodtext}
\textbf{\method\ reasoning near the conclusion.}

The response identifies both functions satisfying the equation,
\[
    f(x)=x
    \qquad\text{and}\qquad
    f(x)=-x,
\]
and preserves the complete solution set in its final answer:
\[
    \boxed{
        f(x)=x
        \quad\text{or}\quad
        f(x)=-x
    }.
\]
\end{methodtext}

\section{Limitations and Future Work}
Our experiments focus on Qwen3 students and verifiable mathematics and code tasks, with models up to 1.7B parameters and a limited 4B evaluation. Extending \method\ to other model families, larger scales, and broader reasoning domains remains future work. \method\ emphasizes difficult prompts, where teacher reliability may also decrease, particularly under larger capability gaps. On held-out code problems never solved by the base student, the 8B teacher achieves pass@16 of $42$--$50\%$, compared with $94$--$98\%$ on the remaining difficulty bands (see Appendix Section ~\ref{app:bands}). Incorporating online estimates of teacher reliability into the gate is a promising direction.

Although \method\ requires teacher scoring in addition to grouped rollouts, its overhead over GRPO remains modest (see Appendix Section ~\ref{app:cost}). Moreover, \method\ remains effective with smaller rollout groups, suggesting that teacher guidance remains useful even with reduced grouped generation. Future work could explore compute-efficient settings with fewer rollouts and adaptive teacher usage.

\section{Use of Large Language Models (LLMs)}
Large language models (LLMs) were used solely to assist with manuscript writing, including grammar, clarity, wording, and organization. All LLM-assisted content was reviewed and revised by the authors, who take full responsibility for the final technical claims, interpretation, and presentation.

\end{document}